\documentclass[11pt]{article}

\usepackage[margin=1in]{geometry}

\usepackage{microtype}
\usepackage{times}

\usepackage{amsmath}
\usepackage{amssymb}
\usepackage{mathtools}
\usepackage{amsthm}
\usepackage{bm}
\usepackage{graphicx}
\usepackage{booktabs}
\usepackage{siunitx}
\usepackage{caption}
\usepackage{subcaption}
\usepackage{dcolumn}

\usepackage{natbib}
\usepackage{hyperref}
\usepackage[capitalize,noabbrev]{cleveref}

\usepackage{algorithm}
\usepackage{algorithmic}

\usepackage[textsize=tiny]{todonotes}

\newcommand{\NN}{\textsc{NN}}
\newcommand{\prop}{\textsc{LRKME}}
\newcommand{\soft}{\textsc{SoftImpute}}

\DeclareMathOperator*{\argmin}{argmin}

\newcommand{\tp}{\intercal}

\newcommand{\bigO}{\ensuremath{\mathop{}\mathopen{}\mathcal{O}\mathopen{}}}
\newcommand{\smallO}{\scalebox{0.7}{$\mathcal{O}$}}

\newcommand{\ind}{\perp\!\!\!\!\perp}

\newcommand{\EE}{{\mathbb{E}}}

\newcommand{\calE}{{\mathcal{E}}}

\newcommand{\calH}{{\mathcal{H}}}

\newcommand{\calK}{{\mathcal{K}}}

\newcommand{\calS}{{\mathcal{S}}}
\newcommand{\calT}{{\mathcal{T}}}
\newcommand{\calU}{{\mathcal{U}}}

\newcommand{\calX}{{\mathcal{X}}}

\makeatletter
\newcommand{\shorteq}{\mathrel{\mkern0.2mu\mathpalette\shorteq@\relax\mkern0.2mu}}
\newcommand{\shorteq@}[2]{\scalebox{0.7}[1]{$\m@th#1=$}}

\newcommand{\shortminus}{\mathrel{\mkern0.2mu\mathpalette\shortminus@\relax\mkern0.2mu}}
\newcommand{\shortminus@}[2]{\scalebox{0.7}[1]{$\m@th#1-$}}

\newcommand{\longeq}[1]{\mathrel{\mathpalette\longeq@{#1}}}
\newcommand{\longeq@}[2]{%
  \begingroup
  \sbox\z@{$\m@th#1=$}%
  \ifdim#2<\wd\z@
    \resizebox{#2}{\height}{\box\z@}%
  \else
    \ifdim#2<3\wd\z@
      \hbox to #2{$\m@th#1=\hss=\hss=\hss=$}%
    \else
      \hbox to #2{$\m@th#1=\cleaders\hbox to 0.2\wd\z@{\hss$#1=$\hss}\hfil=$}%
    \fi
  \fi
  \endgroup
}
\makeatother

\DeclareEmphSequence{\bfseries\itshape} 

\theoremstyle{plain}
\newtheorem{theorem}{Theorem}[section]

\newtheorem{lemma}[theorem]{Lemma}

\theoremstyle{definition}
\newtheorem{definition}[theorem]{Definition}
\newtheorem{assumption}[theorem]{Assumption}
\theoremstyle{remark}
\newtheorem{remark}[theorem]{Remark}

\title{Low-rank Distributional Matrix Completion}

\author{
Jiayi Wang \\
\textit{University of Texas at Dallas} \\[8pt]
Raymond K.W. Wong \\
\textit{Texas A\&M University}
}

\date{}

\begin{document}
\maketitle

\begin{abstract}
We study a distributional generalization of the matrix completion problem in which each entry of the target matrix is a probability distribution rather than a scalar. In this setting, only a subset of matrix entries is observed, and even for observed entries, the underlying distributions are not directly accessible; instead, we observe finitely many samples drawn from them. To represent distributional entries, we employ kernel mean embeddings and introduce a notion of Tucker rank for distribution-valued matrices to capture their low-rank structure. The infinite-dimensional nature of kernel embeddings poses significant methodological challenges. To address this, we introduce functional unfolding operators that link the proposed distributional low-rank structure to the classical Tucker rank for finite-dimensional tensors. Based on this framework, we propose a novel estimator for distributional matrix completion. We establish non-asymptotic error bounds that characterize the statistical performance of the estimator. Extensive experiments on synthetic data and a real-world application demonstrate the effectiveness of the proposed method.
\end{abstract}
\section{Introduction}
\label{sec:intro}
Low-rank matrix completion addresses the problem of recovering missing entries from a partially observed and potentially contaminated data matrix, where the low-rank assumption serves as a crucial structural regularizer to facilitate identification. Due to its flexibility and effectiveness, low-rank matrix completion has been widely applied in a broad range of domains, including recommendation systems \citep{kang2016top,gurini2018temporal,chen2021kernel,zhang2021artificial}, signal processing \citep{weng2012low,zhang2020low,chen2023high,yuchi2023bayesian}, image recovery \citep{changjun2012single,cao2014image,zheng2024scale} and data imputation \citep{chen2017ensemble,chen2020nonconvex,xu2023hrst}. A substantial body of work has been devoted to the theoretical and algorithmic development of matrix completion.
However, most existing approaches are tailored to scalar-valued matrices and do not directly extend to target matrices with non-scalar entries. In this work, we study matrix completion problems in which the entries are probability distributions.


As a motivating example, consider modeling the distribution of daily taxi trip counts between pairs of origin and destination locations (see Section~\ref{sec:real}) in order to capture both the volume and variability of traffic flows. Another example arises in the analysis of quarterly earnings estimates for public companies, where the goal is to study the distribution of forecasts issued by different banks and traders. In this setting, companies index the rows and fiscal quarters across multiple years index the columns; see Section~1.1 of \citet{feitelberg2024distributional} for a detailed discussion.

For observed entries, the underlying distributions can be inferred from the corresponding samples. In practice, however, some entries may be entirely unobserved. In the taxi-trip example, missingness may result from incomplete trip records or limited GPS coverage in certain locations. In the earnings-estimates example, firms may release information irregularly, leading to missing entries for particular quarters or time periods. These challenges highlight the need for efficient methods to impute missing distributional entries.


To the best of our knowledge, there is only one existing work that explicitly addresses distributional matrix completion. In \citet{feitelberg2024distributional}, the authors propose to impute missing entries using a nearest-neighbor-based approach. Specifically, for a missing entry located at $(i,j)$, similar rows are identified by using averaged empirical $2$-Wasserstein distances over the overlapping columns with available observations. The missing distribution is then estimated by taking the empirical Wasserstein barycenter of the distributions associated with these neighboring rows in column $j$.
Their method relies on a simplified observation setting in which column $j$ is observed for all selected neighbors, and each neighbor has the same number of observations $N$ in column $j$. Moreover, the final estimate is represented by $N$ synthetic samples generated from the estimated distribution, which requires $N$ to be sufficiently large in order to accurately recover the underlying distribution. In addition, due to the computational challenges associated with evaluating $2$-Wasserstein distances and computing Wasserstein barycenters for multivariate distributions, the proposed algorithm does not readily extend to multidimensional distributional settings.

In contrast, our approach accommodates a more general observation pattern, allowing the number of samples to vary across observed entries, and naturally extends to multidimensional distributional data.
We represent distributional entries using kernel mean embeddings (KMEs), thereby embedding probability distributions into a Hilbert space and circumventing explicit manifold constraints.
Unlike the nearest-neighbor strategy in \citet{feitelberg2024distributional}, our method is formulated within an empirical risk minimization framework and enforces a global low-rank structure on the distributional matrix through carefully designed regularization.

The proposed low-rank structure is motivated by Tucker decomposition \cite{tucker1966some} in tensor analysis, but operates in a more challenging setting in which one mode is infinite-dimensional. To tackle this, we introduce unfolding operators that map the distributional matrix into tensor products of two spaces (see Definition~\ref{def:unfolding}), and impose nuclear-norm regularization on the corresponding induced linear operators to promote low-rankness. This modeling strategy simultaneously encourages intrinsic dimension reduction in the functional component of the kernel mean embeddings and induces a low-rank structure in the associated expectation functional matrices (see definition in \eqref{eqn:expectation_mat}).
Although the infinite-dimensional nature of the distributional representations poses optimization challenges, we show that, with appropriate regularization and by leveraging the reproducing property of the underlying kernel, the solution admits a finite-dimensional representation. As a result, the proposed optimization problem can be solved efficiently using convex algorithms. On the theoretical side, we establish error bounds for the aggregate squared maximum mean discrepancy (MMD) across all matrix entries. We show that the resulting rate matches that of classical nuclear-norm-regularized matrix completion for (approximately) low-rank matrices, despite the additional technical challenges arising from replacing scalar entries with functional representations. Notably, unlike \citet{feitelberg2024distributional}, our consistency guarantees do not necessarily require the number of samples for each observed entry to diverge.

\section{Problem Setup}
\label{sec:setup}

Let $\mathcal{X}\subseteq\mathbb{R}^d$, and denote by $\mathcal{P}(\mathcal{X})$
the collection of probability distributions of random variables taking values in $\mathcal{X}$.
We consider a distribution-valued matrix $\rho^*\in\mathcal{P}(\mathcal{X})^{m_1\times m_2}$, whose $(i,j)$-th entry $\rho^*(i,j)$ is a probability distribution on $\mathcal{X}$.
The goal of this paper is to recover $\rho^*$ from partial and indirect observations.
Specifically, observations are available only for a subset of the entries of $\rho^*$, and even for these observed entries, the underlying distribution is not directly observed.
Instead, for each observed entry $(i,j)$, we collect i.i.d.\ samples
$X^{(ij)}_1,\dots,X^{(ij)}_{N_{ij}} \sim \rho^*(i,j)$,
where $N_{ij}$ denotes the sample size for entry $(i,j)$.
Let $T(i,j)\in\{0,1\}$ be an indicator of whether the distribution $\rho^*(i,j)$ is observed: $T(i,j)=1$ if samples are collected from $\rho^*(i,j)$, and $T(i,j)=0$ otherwise.
The observed data thus consist of
$\{X^{(ij)}_1,\dots,X^{(ij)}_{N_{ij}} : T(i,j)=1, i=1,\dots,m_1, j=1,\dots,m_2\}$.
Throughout the paper, we assume that the collections
$\{(T(i,j),X^{(ij)}_1,\dots,X^{(ij)}_{N_{ij}}): i=1,\dots,m_1, j=1,\dots,m_2\}$
are mutually independent.

There are various common ways to represent a distribution on $\mathcal{X}$.
For examples, probability mass/density functions and cumulative distribution function.
However, many of these classical functional representations exhibits manifold constraints.
To avoid manifold constraints,
kernel mean embedding (KME) has become a popular tool for Hilbertian embedding 
of probability distributions.
It allows distributions to be treated as elements of a reproducing kernel Hilbert space (RKHS)
\citep[e.g.][]{smola2007hilbert, muandet2017kernel}.  
Moreover, KME supports efficient empirical estimation from samples, and admits a natural metric—the Maximum Mean Discrepancy (MMD)—that facilitates distribution comparison and learning in a wide range of machine learning tasks \citep{gretton2012kernel}. More specifically, the MMD is defined as 
\begin{align*}
&\mathrm{MMD}(\rho_1,\rho_2;\mathcal{H})\\
&:=
\sup_{\substack{\|f\|_{\mathcal{H}} \le 1}}
\left(
\mathbb{E}_{X \sim \rho_1}[f(X)]
-
\mathbb{E}_{Y \sim \rho_2}[f(Y)]
\right)\\
&= \|  \mathbb{E}_{X \sim \rho_1} \, K(X,\cdot) -  \mathbb{E}_{X \sim \rho_2} \, K(X,\cdot) \|_\calH,
\end{align*}
where \(K : \mathcal{X} \times \mathcal{X} \to \mathbb{R}\) is a reproducing kernel and \(\mathcal{H}\) denotes the associated RKHS.
Throughout this work, we represent the entries of the target distribution-valued matrix
\(
\rho^* \in \mathcal{P}(\mathcal{X})^{m_1 \times m_2}
\)
by their corresponding KMEs.
Specifically, for each \((i,j)\), we define
\begin{equation} 
    \mu^*(i,j,\cdot)
    \;=\;
    \mathbb{E}_{X \sim \rho^*(i,j)} \, K(X,\cdot)
    \;\in\;
    \mathcal{H}.   
\end{equation}
When the kernel \(K\) is {characteristic} (for example, Gaussian and Laplacian kernels), the KME is injective, implying that the underlying distribution \(\rho^*(i,j)\) is uniquely determined by \(\mu^*(i,j,\cdot)\) \citep{sriperumbudur2010hilbert}.
Consequently, recovering the distribution-valued matrix \(\rho^*\) is equivalent to recovering its KME matrix \(\mu^*\).
In the following, we therefore focus on the problem of completing the KME matrix
\(
\mu^* \in \mathcal{H}^{m_1 \times m_2}. 
\)

\section{Low Rank Modeling}
Following the classical literature on noisy matrix completion \citep[e.g.,][]{klopp,candes2010matrix}, we aim to impose a suitable ``low-rank'' structural assumption on $\mu^*$ to facilitate recovery from partial observations. 
We begin by observing that the KME matrix \(\mu^*\) can be naturally viewed as a three-way tensor taking values in the tensor product space
\(
\mathbb{R}^{m_1} \otimes \mathbb{R}^{m_2} \otimes \mathcal{H}.
\)
Our goal is to impose low-rank structure on \(\mu^*\) in the Tucker sense.
Note that,
unlike standard tensor learning settings,
the last space $\mathcal{H}$ is often infinite dimensional,
resulting in a partly infinite-dimensional tensor with finite marginal dimensions for the first two modes and an infinite marginal dimension for the last mode.
To deal with this type of tensors,
we need more involved mathematical construct for the investigation of multilinear rank.
Motivated by the notion of functional unfolding operators \citep{wang2022low}, we define three unfolding operators that map \(\mu^*\) into tensor products of two spaces, thereby inducing linear operators whose ranks can be characterized.

Notice that the tensor product space $\mathbb{R}^{m_1}\times \mathbb{R}^{m_2}\times \mathcal{H}$ is the completion of the linear span of all elementary tensors
of the form
\(a_1 \otimes a_2 \otimes h\)
under the inner product
$\langle a_1 \otimes a_2 \otimes h, a'_1 \otimes a'_2 \otimes h' \rangle_{\mathbb{R}^{m_1}\times \mathbb{R}^{m_2}\times \mathcal{H}} = (a_1^\tp a_1') (a_2^\tp a_2') \langle h, h' \rangle_\calH$,
where \(a_1, a_1' \in \mathbb{R}^{m_1}\), \(a_2, a_2' \in \mathbb{R}^{m_2}\), and \(h, h' \in \mathcal{H}\).
\begin{definition} 
\label{def:unfolding}
The unfolding operators \(\mathcal{U}_j\), \(j=1,2,3\), for any elementary tensor   $a_1 \otimes a_2 \otimes h$, are defined as follows:
\begin{align*}
\mathcal{U}_1 &: \mathbb{R}^{m_1} \otimes \mathbb{R}^{m_2} \otimes \mathcal{H}
\to
\mathbb{R}^{m_1} \otimes (\mathbb{R}^{m_2} \otimes \mathcal{H}), \\
&\mathcal{U}_1(a_1 \otimes a_2 \otimes h)
=
a_1 \otimes (a_2 \otimes h); \\[0.5em]
\mathcal{U}_2 &: \mathbb{R}^{m_1} \otimes \mathbb{R}^{m_2} \otimes \mathcal{H}
\to
\mathbb{R}^{m_2} \otimes (\mathbb{R}^{m_1} \otimes \mathcal{H}), \\
&\mathcal{U}_2(a_1 \otimes a_2 \otimes h)
=
a_2 \otimes (a_1 \otimes h); \\[0.5em]
\mathcal{U}_3 &: \mathbb{R}^{m_1} \otimes \mathbb{R}^{m_2} \otimes \mathcal{H}
\to
\mathcal{H} \otimes (\mathbb{R}^{m_1} \otimes \mathbb{R}^{m_2}), \\
&\mathcal{U}_3(a_1 \otimes a_2 \otimes h)
=
h \otimes (a_1 \otimes a_2).
\end{align*}
These definitions extend to any element $\mu \in \mathbb{R}^{m_1} \otimes \mathbb{R}^{m_2} \otimes \calH$ by linearity.
\end{definition}
For a finite-dimensional tensor, the unfolding operators defined above are equivalent to the standard matricization of the tensor defined in Definition \ref{def:matricize} in Appendix \ref{sec:add_tensor}. 
Any element $f \in \mathcal{F}_1 \otimes \mathcal{F}_2$ can be naturally identified with a linear operator mapping $\mathcal{F}_2$ to $\mathcal{F}_1$, defined by
\(
g \in \mathcal{F}_2 \;\longmapsto\; \left\langle f(\star, \cdot),\, g(\cdot) \right\rangle_{\mathcal{F}_2}.
\)
With a slight abuse of notation, we use $f$ to denote both the tensor element and its induced linear operator.
Under this operator interpretation, an element
\(
\mathcal{U}_1 \mu \in \mathbb{R}^{m_1} \otimes \bigl(\mathbb{R}^{m_2} \otimes \mathcal{H}\bigr)
\)
can be viewed as a linear operator mapping the space $\mathbb{R}^{m_2} \otimes \mathcal{H}$ into $\mathbb{R}^{m_1}$.
 Similar operator interpretations apply to $\mathcal{U}_2 \mu$ and $\mathcal{U}_3 \mu$.
And we define the {multilinear rank} of \(\mu \in \mathbb{R}^{m_1} \otimes \mathbb{R}^{m_2} \otimes \calH \) as
\[
\bigl(
\mathrm{rank}(\mathcal{U}_1 \mu),
\mathrm{rank}(\mathcal{U}_2 \mu),
\mathrm{rank}(\mathcal{U}_3 \mu)
\bigr),
\]
where the rank is understood as the rank of the corresponding induced linear operator. 
When the Hilbert space \(\mathcal{H}\) is finite-dimensional, this definition coincides with the classical Tucker rank of a finite-dimensional third-order tensor \citep{tucker1966some,kolda2009tensor}.

Next, we illustrate how a small multilinear rank of $\mu^\ast$ leads to meaningful structural understanding. Assume that $\mathcal{H}$ is a separable Hilbert space. 
Let \(\{e_k\}_{k=1}^q\) denote an orthonormal basis of the Hilbert space \(\mathcal{H}\),
where \(q\) can be infinite.
Then, for each \((i,j)\), the KME admits the expansion
\begin{align}
    \mu^*(i,j,\cdot)
    \;=\;
    \sum_{k=1}^q B(i,j,k)\, e_k(\cdot),
\end{align}
where \(B(i,j,k) \in \mathbb{R}\), \(k = 1,\dots,q\), are the corresponding expansion coefficients.

Suppose that \(\mu^*\) has multilinear rank \((r_1, r_2, r_3)\), with \(r_1 < m_1\), \(r_2 < m_2\), and \(r_3 < q\).
Then there exist a tensor \(G \in \mathbb{R}^{r_1 \times r_2 \times r_3}\) and factor matrices
\(U_1 \in \mathbb{R}^{m_1 \times r_1}\),
\(U_2 \in \mathbb{R}^{m_2 \times r_2}\), and
\(U_3 \in \mathbb{R}^{q \times r_3}\),
such that the coefficient tensor \(B\) admits the Tucker decomposition \citep{tucker1966some}
\begin{align}
\label{eqn:tucker_decomp}
    B
    \;=\;
    G \times_1 U_1 \times_2 U_2 \times_3 U_3,
\end{align}
where \(\times_j\) denotes the mode-\(j\) tensor–matrix product, defined in Definition~\ref{def:n-mode} in Appendix~\ref{sec:add_tensor}.
Here, \(G\) is referred to as the {core tensor}, and the matrices \(U_j\), \(j=1,2,3\), are often assumed to have orthonormal columns.
An illustration of the Tucker decomposition is provided in Figure~\ref{tucker_figure} in Appendix~\ref{sec:add_tensor}.

With such decomposition \eqref{eqn:tucker_decomp}, we have
\begin{align}
\label{eqn:lr-rep}
   &\mu^*(i,j, \cdot)\nonumber\\
   & = \sum_{l_1=1}^{r_1}\sum_{l_2=1}^{r_2}\sum_{l_3=1}^{r_3} G(l_1, l_2, l_3) U_{1}(i,l_1) U_2(j, l_2)s_{l_3}(\cdot), 
\end{align}
where $s_{l_3}(\cdot) = \sum_{k=1}^{q} U_3(k, l_3) e_k(\cdot)$, $k=1,\dots, r_3$.
This shows that only \(r_3\) basis functions \(\{s_{l_3}\}_{l_3=1}^{r_3}\) are sufficient to characterize the functional component of \(\mu^*\).

By the property of KMEs, for any \(f \in \mathcal{H}\),
\(
\mathbb{E}_{X \sim \rho(i,j)}[f(X)]
=
\langle f, \mu^*(i,j,\cdot) \rangle_{\mathcal{H}}.
\)
 Define the matrix \(A \in \mathbb{R}^{m_1 \times m_2}\) by
 \begin{align}
 \label{eqn:expectation_mat}
     A_f(i,j) := \mathbb{E}_{X \sim \rho(i,j)}[f(X)], \qquad f \in \calH.
 \end{align}
For example, if $f(t) = t$, $t \in \calX$ (i.e., an identity function), then $A$ is the mean matrix.  Substituting \eqref{eqn:lr-rep} into the above expression yields
\begin{align}
    &A_f(i,j)\nonumber\\
    &=
    \sum_{l_1=1}^{r_1}
    \sum_{l_2=1}^{r_2}
    \sum_{l_3=1}^{r_3}
    G(l_1,l_2,l_3)
    U_1(i,l_1)
    U_2(j,l_2)
    \langle f, s_{l_3} \rangle_{\mathcal{H}} \nonumber \\
    &=
    \sum_{l_1=1}^{r_1}
    \sum_{l_2=1}^{r_2}
    U_1(i,l_1)
    U_2(j,l_2)
    C_f(l_1,l_2),
\end{align}
where
\(
C_f(l_1,l_2)
:=
\sum_{l_3=1}^{r_3}
G(l_1,l_2,l_3)
\langle f, s_{l_3} \rangle_{\mathcal{H}}.
\)
Consequently, the matrix \(A_f\) admits the factorization
\(
A_f = U_1 C_f U_2^\tp, C_f \in \mathbb{R}^{r_1 \times r_2},
\)
and therefore has rank at most \(\min\{r_1, r_2\}\). By imposing a small multilinear rank on \(\mu^*\), we promote intrinsic dimension reduction in the functional component of the KME, thereby facilitating downstream analysis of the associated distributions, and at the same time induce a low-rank structure
for
the expectation functional matrix \(A_f\).


\section{Estimation}
\label{sec:estimation}
We adopt a regularized $M$-estimation approach to construct the proposed estimator \(\hat{\mu}\).
Specifically,
\begin{align} & \hat{\mu} = \argmin_{\mu \in \mathbb{R}^{m_1} \otimes \mathbb{R}^{m_2} \otimes \calH \; : \; \sup_{i,j}\|\mu(i,j,\cdot) \|_\calH \leq \alpha}\nonumber \\ & \Bigg[\frac{1}{m_1 m_2} \sum_{(i,j):T(i,j)=1} \Big\| \mu(i,j,\cdot) - \frac{1}{N_{ij}} \sum_{l=1}^{N_{ij}} K(X^{(ij)}_l,\cdot)\Big\|^2_\calH \label{eqn:loss}
\\ & \qquad + \lambda \Psi(\mu)\Bigg], \label{eqn: reg} \end{align}
where the term in \eqref{eqn:loss} involves the squared maximum mean discrepancies (MMD) between the estimated KME and the empirical KME at the observed entries.
The functional \(\Psi(\cdot)\) is a regularization term designed to promote low-rank structure in \(\mu\) while simultaneously ensuring the validity of the representer theorem (see Theorem~\ref{thm:representor}).
Here, \(\lambda > 0\) is a tuning parameter controlling the strength of regularization, and \(\alpha\) is an upper bound on the RKHS norm of each kernel mean embedding \(\mu(i,j,\cdot)\).
In the implementation, we simply take $\alpha$ to be large value (e.g., $\max_{i,j} 10\|\sum_{l=1}^{N_{ij}} K(X^{(ij)}_l,\cdot)/{N_{ij}} \|_\calH$).

For the regularization term \(\Psi(\mu)\), we adopt a nuclear-norm-based penalty as a convex surrogate to promote low-rank structure.
Specifically, we define
\begin{align}
\label{eqn:reg_nuclear}
    \Psi(\mu) = \sum_{j=1}^3 \beta_j \| \calU_j \mu \|_*, 
\end{align}
where \(\|\cdot\|_*\) denotes the nuclear norm of the induced operator corresponding to each unfolding.
The weights \(\beta_j > 0\) (such that \(\sum_{j=1}^3 \beta_j = 1\)) control the relative degree of low-rank regularization imposed on the different unfoldings of \(\mu\).
By default, we set $\beta_j=1/3$ for $j=1,2,3$.

Since $\hat{\mu} \in \mathbb{R}^{m_1} \otimes \mathbb{R}^{m_2} \otimes \mathcal{H}$, where $\mathcal{H}$ may be infinite-dimensional, the optimization problem in \eqref{eqn: reg} appears challenging due to its infinite-dimensional nature. However, by the following representer theorem, one can verify that the solution to \eqref{eqn: reg} admits a finite-dimensional representation. 
\begin{theorem}[Representer Theorem]
\label{thm:representor}
For any $(i,j)$, $\hat{\mu}(i,j,\dots)$ lies in the following finite-dimensional space 
\begin{align}
\label{eqn:sol-space}
    \calK = \mathrm{span} \{K(X_l^{(ij)}, \cdot), l = 1,\dots, N_{ij}, T_{ij} = 1\}.
\end{align}
\end{theorem}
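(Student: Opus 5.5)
Let $P$ denote the orthogonal projection of $\calH$ onto the finite-dimensional subspace $\calK$ in \eqref{eqn:sol-space}, and let $\mathcal{P} = I_{m_1}\otimes I_{m_2}\otimes P$ be the induced operator on $\mathbb{R}^{m_1}\otimes\mathbb{R}^{m_2}\otimes\calH$, acting entrywise as $(\mathcal{P}\mu)(i,j,\cdot)=P\mu(i,j,\cdot)$. The plan is a projection argument: for any feasible $\mu$, we show that $\mathcal{P}\mu$ is feasible and that its objective value is no larger, with strict decrease unless $\mu=\mathcal{P}\mu$. It then follows that every minimizer satisfies $\hat\mu=\mathcal{P}\hat\mu$, which is exactly the claim.

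We proceed in three steps.

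\textbf{Feasibility.} Since $\|P h\|_\calH\le\|h\|_\calH$, we have $\sup_{i,j}\|P\mu(i,j,\cdot)\|_\calH\le\alpha$.

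\textbf{Loss term.} For each observed $(i,j)$, the empirical embedding $\bar\mu_{ij}:=N_{ij}^{-1}\sum_l K(X^{(ij)}_l,\cdot)$ lies in $\calK$. By Pythagoras,
\[
\|\mu(i,j,\cdot)-\bar\mu_{ij}\|_\calH^2=\|P\mu(i,j,\cdot)-\bar\mu_{ij}\|_\calH^2+\|(I-P)\mu(i,j,\cdot)\|_\calH^2 .
\]
Hence the loss in \eqref{eqn:loss} does not increase under $\mathcal{P}$.

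\textbf{Regularizer.} This is the main step: we must show $\|\calU_k\mathcal{P}\mu\|_*\le\|\calU_k\mu\|_*$ for $k=1,2,3$. The key observation is that each unfolding intertwines $\mathcal{P}$ with a tensor-product operator:
\begin{align*}
\calU_1\mathcal{P}\mu &= (I_{m_1}\otimes(I_{m_2}\otimes P))\,\calU_1\mu,\\
\calU_2\mathcal{P}\mu &= (I_{m_2}\otimes(I_{m_1}\otimes P))\,\calU_2\mu,\\
\calU_3\mathcal{P}\mu &= (P\otimes I)\,\calU_3\mu.
\end{align*}
These identities are checked on elementary tensors and extended by linearity and continuity. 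In operator language, $\calU_1\mathcal{P}\mu$ equals the operator $\calU_1\mu$ composed on the right with the contraction $I_{m_2}\otimes P$; the same holds for $\calU_2$, while $\calU_3\mathcal{P}\mu$ is $P$ composed on the left with $\calU_3\mu$. Since the nuclear norm satisfies $\|AB\|_*\le\|A\|_{op}\|B\|_*$ and $\|BA\|_*\le\|B\|_*\|A\|_{op}$, and orthogonal projections have operator norm at most $1$, each of the three nuclear norms is non-increasing. Because $\beta_k>0$, it follows that $\Psi(\mathcal{P}\mu)\le\Psi(\mu)$.

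\textbf{Conclusion.} The objective at $\mathcal{P}\mu$ is at most the objective at $\mu$ minus $\frac{1}{m_1m_2}\sum_{T(i,j)=1}\|(I-P)\mu(i,j,\cdot)\|^2_\calH$. This already forces the observed entries of $\hat\mu$ into $\calK$, but the unobserved entries need a separate argument.

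For unobserved entries, strictness has to come from the regularizer. Write $\mu=\mathcal{P}\mu+\nu$, where $\nu=(I-\mathcal{P})\mu$. For the third unfolding, the range of $\calU_3\mathcal{P}\mu$ lies in $\calK$ while the range of $\calU_3\nu$ lies in $\calK^\perp$. Since the operators $(\calU_3\mathcal{P}\mu)^*(\calU_3\mathcal{P}\mu)$ and $(\calU_3\nu)^*(\calU_3\nu)$ are positive semidefinite, we have
\[
\|\calU_3\mu\|_*=\mathrm{tr}\bigl((\calU_3\mathcal{P}\mu)^*(\calU_3\mathcal{P}\mu)+(\calU_3\nu)^*(\calU_3\nu)\bigr)^{1/2}.
\]
We then need to show this is strictly larger than $\|\calU_3\mathcal{P}\mu\|_*$ whenever $\nu\neq0$. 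This follows from strict operator monotonicity of the square root on a nonzero positive increment together with the trace. Combined with $\beta_3>0$, it yields strict decrease, and hence $\hat\mu=\mathcal{P}\hat\mu$.

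I expect the main obstacle to be verifying the intertwining identities rigorously in the infinite-dimensional tensor space and handling this strictness for unobserved entries. If the strict monotonicity step proves delicate, the fallback is to state the theorem as the existence of a minimizer in $\calK$, which follows immediately from the weak inequalities.
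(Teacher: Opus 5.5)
Your proposal is correct, and its core is the paper's projection argument. The paper also splits $\mu=\mu_1+\mu_2$ entrywise into a $\calK$ part and a $\calK^\perp$ part. It then bounds the unfolded nuclear norms via $\sigma_k(\calU_j\mu\,P_\calK)\le\sigma_k(\calU_j\mu)$ for $j=1,2$ and $\sigma_k(P_\calK\,\calU_3\mu)\le\sigma_k(\calU_3\mu)$. That is exactly your right/left composition with a contraction.

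Your version is more careful than the paper's in two places:
\begin{itemize}
\item The paper asserts that the loss at $\mu$ and at $\mu_1$ are equal. The true relation is your Pythagorean identity, which gives only $\ge$; that inequality is all that is needed.
\item The paper stops at the weak inequalities. These only show that projecting a minimizer gives another minimizer, which is your stated fallback. The objective is not strictly convex in the unobserved entries, so this alone does not show that \emph{every} minimizer lies in $\calK$.
\end{itemize}

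Your strictness step closes that gap, and it is sound. Here $(\calU_3\mu)^*(\calU_3\mu)$ is just the $m_1m_2\times m_1m_2$ Gram matrix $G_\mu=[\langle\mu(i,j,\cdot),\mu(i',j',\cdot)\rangle_\calH]$, so there is no infinite-dimensional trace issue to worry about. You have $G_\mu=G_{\mathcal{P}\mu}+G_\nu$, and $G_\nu\neq 0$ iff $\nu\neq0$. The difference $(G_{\mathcal{P}\mu}+G_\nu)^{1/2}-G_{\mathcal{P}\mu}^{1/2}$ is positive semidefinite by operator monotonicity. It is also nonzero, since otherwise squaring would give $G_\nu=0$, so it has positive trace.

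Since $\lambda\beta_3>0$, this strictly lowers the objective whenever $\nu\neq0$. It handles observed and unobserved entries at once, so your separate observed-entry step is redundant but harmless.
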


Specifically,
the target optimization \eqref{eqn: reg} can be recast as a finite-dimensional optimization over parameter
$A \in \mathbb{R}^{m_1\times m_2 \times \Tilde{N}}$
such that
${\mu}(i,j,\cdot) = \sum_{l=1}^{\Tilde{N}} A(i,j,l) K(\bm x_l,\cdot)$, where 
$\Tilde{N} = \sum_{ij}T_{ij}N_{ij}$
and
$\bm x = [ x^{(i,j)}]_{(i,j):T_{i,j}=1}^\tp \in \mathbb{R}^{\Tilde{N}}$ with $x^{(i,j)} = [X^{(i,j)}_1,\dots, X^{(i,j)}_{N_{ij}}]^\tp \in \mathbb{R}^{N_{ij}}$.  It is easy to verify that 
\begin{multline*}
    \Big\| \mu(i,j,\cdot) - \frac{1}{N_{ij}} \sum_{l=1}^{N_{ij}} K(X^{(ij)}_l,\cdot)\Big\|^2_\calH \\
    = \sum_{l,l'=1,\dots,\Tilde{N}}A(i,j,l) A(i,j,l') K(\bm x_l, \bm x_{l'}) 
    - 2 \sum_{l}^{\Tilde{N}}\sum_{l'=1}^{N_{ij}} A(i,j,l)/N_{ij}K(\bm x_l, X_{l'}^{(ij)}) \\
    + \frac{1}{N^2_{ij}}\sum_{l,l'=1,\dots,N_{ij}} K(\bm  X_{l}^{(ij)}, X_{l'}^{(ij)}).
\end{multline*}
Re-expressing this term in vector-matrix notation,
one can show that the optimization \eqref{eqn:loss} involves the Gram matrix $K_G: = [K(\bm x_l, \bm x_m)]_{l,m=1,\dots,\Tilde{N}} \in \mathbb{R}^{\Tilde{N} \times \Tilde{N}}$.

In practice, however, the total number of observations \(\sum_{i,j} T_{ij} N_{ij}\) can be very large, rendering the optimization computationally expensive.
To address this issue, one may employ standard kernel approximation techniques such as the Nystr\"{o}m method \citep{williams2001using,drineas2005nystrom} or Random Fourier Features \citep{rahimi2007random} to obtain scalable approximations of the Gram matrix \(K_G\).  More specifically, we approximate 
$K_G$ by $Z Z^\tp$, where $Z \in \mathbb{R}^{(\sum_{i,j}T_{ij} N_{ij}) \times q}$, with $q \ll \sum_{i,j}T_{ij} N_{ij}$. 

Next, we rewrite the optimization into the form of (finite-dimensional) tensor optimization problem explicitly.
To this end, we construct the tensor \(Y \in \mathbb{R}^{m_1 \times m_2 \times q}\) as follows.
For each observed entry \((i,j)\) with \(T_{ij}=1\), let \(Y(i,j,\cdot)\in\mathbb{R}^q\) be the average of the rows of \(Z\) corresponding to the sample vector \(x^{(i,j)}\);
for unobserved entries with \(T_{ij}=0\), set \(Y(i,j,\cdot)=\bm{0}\).
With this construction, the optimization problem in \eqref{eqn: reg} admits the following re-parameterization:
\begin{align} \label{eqn:repar} &\hat{B}  = \argmin_{B \in \mathbb{R}^{m_1\times m_2 \times q}, \sup_{i,j}\|B(i,j,\cdot)\|_2 \leq \alpha} 
\left[ \sum_{ij} T_{ij} \|B(i,j,\cdot) - Y(i,j,\cdot)\|_2^2 + \lambda \left(\sum_{j=1}^3 \beta_j\| \calU_j B\|_*\right)\right]
\end{align}
where $B(i,j,\cdot), Y(i,j,\cdot)\in \mathbb{R}^q$ and $\|\cdot\|_2$ denotes the Euclidean norm.  
With $\hat{B}$, the estimator \(\hat{\mu}\) is obtained by
\[
\hat{\mu}(i,j,t)
\;=\;
z_t^{\tp}\, \hat{B}(i,j,\cdot),
\]
where \(z_t \in \mathbb{R}^q\) is the feature vector associated with the input \(t \in \mathcal{X}\).
More specifically, \(z_t\) can be constructed as
\(
z_t
\;=\;
\bigl[\, K(x_\ell, t) \,\bigr]_{x_\ell \in \bm{x}}^{\tp} \, Z^{+},
\)
where 
 \(Z^{+}\) denotes the Moore--Penrose pseudoinverse of \(Z\).

We solve \eqref{eqn:repar} using an accelerated alternating direction method of multipliers (ADMM) algorithm \citep{kadkhodaie2015accelerated}. The details of the algorithm can be found in Appendix \ref{sec:algo}. 

\begin{remark}
    The optimization problem for distributional matrix completion in \eqref{eqn: reg} shares some similarity with tensor completion methods \cite{liu2013tensor,mu2014square,gandy2011tensor}. However, there are several fundamental differences. First, most existing tensor completion approaches consider tensors whose modes are all finite-dimensional, whereas our distributional tensor contains an infinite-dimensional mode corresponding to probability distributions. Second, conventional tensor completion methods typically treat all modes symmetrically and assume that individual tensor entries are observed or missed independently. In contrast, for the distributional matrix $\mu$—which can be viewed as a tensor-valued element in the product space $\mathbb{R}^{m_1} \times \mathbb{R}^{m_2} \times \mathcal{H}$—missingness occurs at the level of entire fibers, namely $\mu(i,j,\cdot)$ for $i = 1,\ldots,m_1$ and $j = 1,\ldots,m_2$. These structural distinctions lead to substantial differences in the associated theoretical analysis.
On the other hand, there exist tensor-related works that allow one mode of the tensor to consist of functions \citep[e.g.,][]{han2024guaranteed,larsen2024tensor}. However, these works primarily focus on decomposition problems rather than tensor or matrix completion.
\end{remark}

\section{Theoretical Gaurantees}
\label{sec:theory}

In this section, we establish theoretical guarantees for the estimator $\hat{\mu}$.
Our analysis quantifies estimation error through the aggregate squared maximum mean discrepancy (MMD) across all matrix entries, both observed and unobserved:
\[
\sum_{i,j} \bigl\|\hat{\mu}(i,j,\cdot) - \mu^*(i,j,\cdot)\bigr\|_{\mathcal{H}}^2 .
\]
We begin by introducing notation that will be used throughout the section.
Define
$m = \min\{m_1,m_2\}$, 
$M = \max\{m_1,m_2\}$
and
$d = m_1+m_2$.
We write $\lesssim$ ($\gtrsim$) to denote inequalities holding up to an absolute positive multiplicative constant, and write $a \asymp b$ when both $a \lesssim b$ and $b \gtrsim a$ hold.
Finally, let
$N = \min_{i,j: T_{ij} =1} N_{ij}$
denote the minimum sample size among all \textit{observed} entries.
Next, we impose following technical assumptions.
\begin{assumption}
    \label{ass:kernel}
    There exists a constant $C_K>0$ such that $\sup_{x \in \calX} |K(x,x)| \leq C_K$.
\end{assumption}
\begin{assumption}
    \label{ass:data}
   The observation indicators $\{T_{ij}\}_{i=1,\dots, m_1, j = 1,\dots, m_2}$ are independent Bernoulli random variables with $\pi_{ij} = \Pr(T_{ij} = 1) > 0$.   $\{X^{(i,j)}_l\}_{l=1, \dots, N_{ij}}$ are independent samples generated from the distribution $\rho(i,j)$ and they are independent of $\{T_{ij}\}$. Also,   $\{X^{(i,j)}_l\}_{l=1, \dots, N_{ij}}$ are independent across different $(i,j)$.
\end{assumption}
 Define
$\pi_L= \min_{i,j}\pi_{ij}$ and $\pi_U = \max_{i,j}\pi_{ij}$. 
\begin{assumption}
    \label{ass:missing}
    There exists a constant $\xi>0$ such that $\pi_U/\pi_L \leq \xi$. In addition, $\pi_L^{-1} = \smallO(m/\log d)$.
\end{assumption}
\begin{assumption}
    \label{ass:sup}
    $\sup_{i,j} \|\mu^*(i,j,\cdot)\|_\calH \leq \alpha$.
\end{assumption}
Assumption~\ref{ass:kernel} imposes a mild regularity condition on the kernel function and is introduced to control the convergence of the empirical KME. 
Assumption~\ref{ass:data} specifies the data-generating process. The corresponding missing structure allows non-uniform missingness and is standard in the matrix completion literature \citep[e.g.][]{klopp,negahban2012restricted,wang2021matrix,zhao2025noisy}.
Assumption~\ref{ass:missing} is a commonly adopted condition for nuclear-norm-based matrix completion estimators \citep[e.g.][]{klopp,negahban2012restricted,koltchinskii2011nuclear,wang2021matrix,li2024pairwise}.
The first part of this assumption rules out scenarios in which the observation probabilities differ in their asymptotic orders.
The second part is a mild technical requirement to help establish the consistency of the estimator.
Finally, Assumption~\ref{ass:sup} is a realization assumption, so that the ground truth satisfies the constraint in our optimization~\eqref{eqn: reg}.

First, we derive Theorem \ref{thm:approximately} for the case when $\mu^*$ has approximately low multilinear rank, i.e., $\|\calU_j\mu^*\|_* \lesssim \sqrt{m_1m_2}$.

\begin{theorem}
    \label{thm:approximately}
     Under Assumptions \ref{ass:kernel}-\ref{ass:sup}, if we take
\begin{align*}
   & (\lambda \beta_1  + \lambda \beta_2)^{-1}=  \\
   & \qquad  \bigO \left( \left[\frac{\sqrt{M\pi_U \log d}}{m_1 m_2}\max\left\{ 
 \Delta_N \sqrt{\log d}
 ,  \alpha \right\} \right]^{-1}\right), 
\end{align*}
with $\Delta_N = C_K/N + \sqrt{\log d/N}$,
then, with probability at least $1- \kappa/d$ for some constant $\kappa>0$, we have
\begin{multline*}
   \frac{1}{m_1m_2} \sum_{i,j} \|\hat{\mu}(i,j,\cdot)- {\mu}^*(i,j,\cdot)\|_\calH^2   \lesssim\\
  ({\lambda}/{\pi_L})\left(   \beta_1  \|\calU_1(\mu^*)\|_*+ \beta_2 \|\calU_2(\mu^*)\|_* + \beta_3 \|\calU_3(\mu^*)\|_* \right)
  + (2\alpha)^2 \sqrt{\frac{\pi_U \log d}{\pi_L^2m_1m_2}}.\\ 
\end{multline*}
More specifically, if we take $\lambda\beta_1, \lambda\beta_2 \asymp \frac{\sqrt{M\pi_U \log d}}{m_1 m_2}\max\left\{ 
 \Delta_N \sqrt{\log d}
 ,  \alpha \right\}$, then, with probability at least $1-\kappa/d$,
 we have
  \begin{multline*}
    \frac{1}{m_1m_2} \sum_{i,j} \|\hat{\mu}(i,j,\cdot)- {\mu}^*(i,j,\cdot)\|_\calH^2 \lesssim   \\
    \max\{\alpha , \Delta_N \sqrt{\log d}\} \sqrt{\frac{\pi_U \log d}{\pi_L^2 m}} \left\{ \frac{ \|\calU_1(\mu^*)\|_*}{\sqrt{m_1m_2}} +  \right.\left.\frac{ \|\calU_2(\mu^*)\|_*}{\sqrt{m_1m_2}} \right\}
     +  (2\alpha)^2 \sqrt{\frac{\pi_U \log d}{\pi_L^2m_1m_2}} + \frac{\lambda \beta_3}{\pi_L} \|\calU_3(\mu^*)\|_*.  \label{eqn:bound1}
  \end{multline*}
\end{theorem}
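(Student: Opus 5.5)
We follow the standard Klopp-type analysis of nuclear-norm-penalized matrix completion, adapted to the partly infinite-dimensional tensor $\mu\in\mathbb{R}^{m_1}\otimes\mathbb{R}^{m_2}\otimes\calH$.

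\textbf{Step 1: basic inequality.} Write $\Delta=\hat\mu-\mu^*$ and $\bar\mu(i,j,\cdot)=N_{ij}^{-1}\sum_l K(X^{(ij)}_l,\cdot)$. Set $\epsilon_{ij}=\bar\mu(i,j,\cdot)-\mu^*(i,j,\cdot)\in\calH$; these are independent and mean zero. By Assumption~\ref{ass:sup}, $\mu^*$ is feasible, so optimality of $\hat\mu$ gives
\[
\frac{1}{m_1m_2}\sum_{ij}T_{ij}\|\Delta(i,j,\cdot)\|_\calH^2\le \frac{2}{m_1m_2}\langle \Sigma,\Delta\rangle+\lambda\bigl(\Psi(\mu^*)-\Psi(\hat\mu)\bigr),
\]
where $\Sigma=\sum_{ij}T_{ij}\,e_i\otimes e_j\otimes\epsilon_{ij}$. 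We bound the stochastic term in two ways, via $\calU_1$ and via $\calU_2$, using duality of operator and nuclear norms:
\[
\langle\Sigma,\Delta\rangle\le\|\calU_k\Sigma\|_{op}\|\calU_k\Delta\|_*,\qquad k=1,2.
\]
Splitting $\langle\Sigma,\Delta\rangle$ into proportions $\beta_1/(\beta_1+\beta_2)$ and $\beta_2/(\beta_1+\beta_2)$, and requiring $\lambda\beta_k\ge 2\cdot\frac{2}{m_1m_2}\|\calU_k\Sigma\|_{op}\cdot\frac{\beta_k}{\beta_1+\beta_2}$, yields
\[
\frac{1}{m_1m_2}\sum T_{ij}\|\Delta\|^2\lesssim\lambda\sum_k\beta_k\|\calU_k\mu^*\|_*,
\]
after using the triangle inequality $\|\calU_k\Delta\|_*\le\|\calU_k\hat\mu\|_*+\|\calU_k\mu^*\|_*$. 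We do not need a cone condition here because the target is the ``slow rate'' bound. The $\calU_3$ term is simply bounded by $\lambda\beta_3\|\calU_3\mu^*\|_*$, with $\Psi(\hat\mu)\ge0$ discarded.

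\textbf{Step 2: operator-norm bound (main obstacle).} We need
\[
\|\calU_1\Sigma\|_{op}\lesssim\sqrt{M\pi_U\log d}\,\max\{\Delta_N\sqrt{\log d},\alpha\},
\]
and similarly for $\calU_2$. Viewing $\calU_1\Sigma$ as an operator $\mathbb{R}^{m_2}\otimes\calH\to\mathbb{R}^{m_1}$, it is a sum of independent random operators $Z_{ij}=T_{ij}\,e_i\otimes(e_j\otimes\epsilon_{ij})$. We plan to apply a noncommutative Bernstein inequality for Hilbert-space-valued operators; the dimension-free intrinsic version (Minsker) is needed because $\calH$ is infinite-dimensional, but one side is $m_1$-dimensional, so $\calU_1\Sigma(\calU_1\Sigma)^*$ is an $m_1\times m_1$ matrix.

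First, $\|\epsilon_{ij}\|_\calH$ is controlled by a Hilbert-space Bernstein/McDiarmid bound using Assumption~\ref{ass:kernel}: $\|\epsilon_{ij}\|\lesssim\Delta_N$ uniformly over $(i,j)$ with probability at least $1-1/d$, after a union bound over the $m_1m_2$ entries. This gives the $\Delta_N\sqrt{\log d}$ factor. Alternatively, the trivial bound $\|\epsilon_{ij}\|\le 2\sqrt{C_K}$ gives the $\alpha$-type factor. We then truncate or condition on this event and compute the variance proxies:
\[
\Bigl\|\sum\EE Z_{ij}Z_{ij}^*\Bigr\|\le\max_i\sum_j\pi_{ij}\EE\|\epsilon_{ij}\|^2\le m_2\pi_U\sigma^2,
\]
and analogously for the other side, which gives $m_1\pi_U\sigma^2$ via the trace over $\calH$. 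This produces the rate $\sqrt{M\pi_U\log d}$, with the $\log d$ term dominated under Assumption~\ref{ass:missing}. Handling the conditioning (which breaks the mean-zero property) and the infinite-dimensional side are the delicate points here.

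\textbf{Step 3: from observed to all entries.} We convert $\sum T_{ij}\|\Delta\|^2$ into $\pi_L\sum\|\Delta\|^2$ minus a deviation term. Since $\|\Delta(i,j,\cdot)\|\le2\alpha$, the scalars $T_{ij}\|\Delta(i,j)\|^2$ are bounded. We plan a uniform concentration (peeling argument plus symmetrization, as in Klopp, Lemma~12) over the constraint class, giving
\[
\sum_{ij}(T_{ij}-\pi_{ij})\|\Delta(i,j)\|^2\gtrsim-(2\alpha)^2\sqrt{\pi_U m_1m_2\log d}
\]
with high probability, or else the bound on the error holds trivially. The nuclear-norm component of the deviation is absorbed into the $\lambda$ terms via a Rademacher operator-norm bound $\EE\|\calU_k(\sum\varepsilon_{ij}T_{ij}\cdots)\|_{op}\lesssim\alpha\sqrt{M\pi_U\log d}$, which explains the $\alpha$ inside the max in the choice of $\lambda$. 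Dividing by $\pi_L m_1m_2$ gives the first display of the theorem. Substituting the stated $\lambda\beta_1,\lambda\beta_2$ and using $\sqrt{M}/(m_1m_2)=1/\sqrt{m\,m_1m_2}$ gives the second display.
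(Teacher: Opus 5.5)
Steps 1 and 2 match the paper's argument. You use the same basic inequality, the same split of the cross term between $\calU_1$ and $\calU_2$ with weights $\beta_k/(\beta_1+\beta_2)$, nuclear/operator-norm duality and the triangle inequality. The one difference is the operator-norm bound on $\calU_k\Sigma$.
\begin{itemize}
\item You apply an intrinsic-dimension matrix Bernstein inequality to $\sum_{ij}T_{ij}\,e_i\otimes(e_j\otimes\epsilon_{ij})$.
\item Lemma \ref{lem:noise} instead bounds the $m_1\times m_1$ Gram matrix $\bigl(\sum_j T_{ij}T_{i'j}\langle\epsilon(i,j,\cdot),\epsilon(i',j,\cdot)\rangle_\calH\bigr)_{i,i'}$ by splitting off its diagonal.
\end{itemize}
Your variance proxies are right and the intrinsic dimension is at most $2m_1$. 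Since $T_{ij}\epsilon_{ij}$ is already centred, you also avoid the paper's conditioning on $\sum_{ij}T_{ij}$; the bias from truncating $\|\epsilon_{ij}\|_\calH$ at level $\Delta_N$ is negligible. Two small corrections. First, the $\alpha$ inside $\max\{\Delta_N\sqrt{\log d},\alpha\}$ does not come from the noise term; the crude bound $\|\epsilon_{ij}\|_\calH\le 2\sqrt{C_K}$ has nothing to do with $\alpha$. It comes from Step 3. Second, do not discard $\Psi(\hat\mu)$ in Step 1: the $\|\calU_k\hat\mu\|_*$ terms produced in Step 3 must be absorbed by $-\lambda\beta_k\|\calU_k\hat\mu\|_*$.

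The genuine gap is Step 3, which is exactly where the $\calH$-valued entries matter, and which you settle by appeal to Klopp. Klopp symmetrizes, applies the scalar contraction principle to $t\mapsto t^2$ at each scalar entry $\Delta_{ij}$, and then uses $\sum_{ij}\varepsilon_{ij}T_{ij}\Delta_{ij}\le\|\Sigma_R\|\,\|\Delta\|_*$ with $\Sigma_R=\sum_{ij}\varepsilon_{ij}T_{ij}e_ie_j^\top$. Here a single sign $\varepsilon_{ij}$ multiplies $\|\Delta(i,j,\cdot)\|_\calH^2$, a nonlinear function of a whole fiber. So there is no scalar entry of $\calU_k\Delta$ to contract, and the random operator $\calU_k(\sum_{ij}\varepsilon_{ij}T_{ij}\cdots)$ in your sketch is never specified. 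The natural candidates fail:
\begin{itemize}
\item Passing through the scalar matrix $(\|\Delta(i,j,\cdot)\|_\calH)_{ij}$ requires its nuclear norm to be controlled by $\|\calU_k\Delta\|_*$. This fails already for $\calH=\mathbb{R}$, where that matrix is the entrywise absolute value, which can inflate the trace norm by an unbounded factor.
\item A vector-valued contraction yields an $\calH$-valued Rademacher field whose supremum is infinite when $\dim\calH=\infty$ (take $\Delta$ supported on one observed fiber).
\end{itemize}
The paper's Lemma \ref{lem:frobenius} instead writes $\calU_1\Delta=\sum_l\sigma_l\,a^{(l)}\otimes f^{(l)}$ and expands $\|\Delta(i,j,\cdot)\|_\calH^2=\sum_{l,l'}\sigma_l\sigma_{l'}a^{(l)}_ia^{(l')}_i\langle f^{(l)}_j,f^{(l')}_j\rangle_\calH$. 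Each term then pairs the scalar matrix $T\circ\varepsilon$, whose operator norm is of order $\sqrt{\pi_U M}$, with a rank-one $uv^\top$ satisfying $\|u\|_2,\|v\|_2\le 1$. Be aware, though, that with the weights $\sigma_l\sigma_{l'}$ kept this only gives $\|T\circ\varepsilon\|\,\|\calU_1\Delta\|_*^2$, which is quadratic in the nuclear norm. The peeling step and the absorption into $\lambda\beta_k\|\calU_k\hat\mu\|_*$ both need the linear form $\sup_{i,j}\|\Delta(i,j,\cdot)\|_\calH\,\|\calU_k\Delta\|_*\sqrt{\pi_U M\log d}$. That linearization for fiber-sampled $\calH$-valued entries is the missing idea your Step 3 still has to supply.
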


 By choosing \(\lambda \beta_3\) to be sufficiently small, the last term in \eqref{eqn:bound1} becomes negligible.
On the other hand, assigning a larger weight to \(\beta_3\) promotes a reduction in the number of basis functions required to represent the KMEs, which is often beneficial in terms of downstream analysis and interpretability.
When the first term in \eqref{eqn:bound1} dominates, the resulting error bound matches the rate established for classical nuclear-norm–regularized matrix completion of approximately low-rank matrices \citep{negahban2012restricted}, despite the more challenging setting considered here, where each matrix entry corresponds to a probability distribution rather than a scalar.
In our technical analysis, we develop novel lemmas to tackle these unfolding operators. See Lemma \ref{lem:noise}  and \ref{lem:frobenius}  for details.

Recall that, 
to our knowledge, \citet{feitelberg2024distributional} is the only alternative method for distributional matrix completion.
Compared to the theoretical results in \citet{feitelberg2024distributional}, our analysis differs in two important aspects.
First, the bounds in \citet{feitelberg2024distributional} are {pointwise}, in the sense that they characterize the estimation error at a fixed location \((i,j)\).
Second, their consistency guarantees require  \(N\) to diverge.
In contrast, our results provide global guarantees over the entire matrix and do not require \(N\) to grow. More specifically, the term \(\Delta_N\) captures the convergence rate of the empirical kernel mean embedding with respect to the per-entry sample size \(N\).
Our bound indicates that consistency of the final estimator as long as $m$ diverges. 
This phenomenon arises from the intrinsic low-rank structure of \(\mu^\ast\), which enables effective information sharing across observed entries.
As a result, the proposed estimator not only imputes distributions for unobserved entries but also yields more accurate estimates for observed entries with limited sample sizes.

Next, we consider the setting when $\mu^*$ has exactly low multilinear rank. 

\begin{theorem}
    \label{thm:low}
   Under Assumptions \ref{ass:kernel}-\ref{ass:sup}, suppose that $\mu^*$ has multilinear rank $(r_1, r_2, r_3)$, where $r_1, r_2, r_3$ are of constant orders.  
    By taking 
\begin{align*}
   & (\lambda \beta_1  + \lambda \beta_2)^{-1}=  
 \bigO \left( \left[\frac{\sqrt{M\pi_U \log d}}{m_1 m_2}\max\left\{ 
 \Delta_N \sqrt{\log d}
 ,  \alpha \right\} \right]^{-1}\right), 
\end{align*}
we have with probability at least $1- \kappa/d$ for some constant $\kappa>0$, 
\begin{multline}
    \frac{1}{m_1m_2} \sum_{i,j} \|\hat \mu(i,j,\cdot) - \mu(i,j,\cdot)\|^2_F \lesssim\\  \min\left\{\frac{\lambda^2 \beta_1^2m_1m_2r_1}{\pi_L^2} + \frac{\lambda^2\beta_2^2m_1m_2r_2}{\pi_L^2} , \frac{\lambda^2\beta_3^2 m_1m_2r_3}{\pi_L^2} \right\} 
    + (2\alpha)^2 \sqrt{\frac{\pi_U \log d}{\pi_L^2m_1m_2}}.  \nonumber   
\end{multline}
Furthermore, if we take $\lambda\beta_1, \lambda\beta_2, \lambda\beta_3 \asymp \frac{\sqrt{M\pi_U \log d}}{m_1 m_2}\max\left\{ 
 \Delta_N \sqrt{\log d}
 ,  \alpha \right\}$, then with probablity at least $1-\kappa/d$,
 we have
  \begin{multline*}
     \frac{1}{m_1m_2} \sum_{i,j} \|\hat \mu(i,j,\cdot) - \mu(i,j,\cdot)\|^2_F  \lesssim\\
        \frac{\pi_U}{\pi_L^2} \frac{[\max\{\alpha,\Delta_N\}]^2 \min\{r_1, r_2, r_3\}}{m}\log^2 d
        + (2\alpha)^2 \sqrt{\frac{\pi_U \log d}{\pi_L^2m_1m_2}}.
  \end{multline*}
\end{theorem}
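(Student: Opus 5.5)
Theorem~\ref{thm:low} will be derived from the machinery behind Theorem~\ref{thm:approximately}, replacing the crude bound $\|\calU_j\mu^*\|_*$ by a decomposability argument that exploits exact low rank. Write $\Delta = \hat\mu - \mu^*$ and $\mu_E(i,j,\cdot)=N_{ij}^{-1}\sum_l K(X_l^{(ij)},\cdot)$.

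\textbf{Basic inequality.} Optimality of $\hat\mu$ and feasibility of $\mu^*$ (Assumption~\ref{ass:sup}) give
\[
\frac{1}{m_1m_2}\sum_{ij}T_{ij}\|\Delta(i,j,\cdot)\|_\calH^2 \le \frac{2}{m_1m_2}\langle \Sigma,\Delta\rangle + \lambda\bigl(\Psi(\mu^*)-\Psi(\hat\mu)\bigr),
\]
where $\Sigma(i,j,\cdot)=T_{ij}(\mu_E-\mu^*)(i,j,\cdot)$.

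\textbf{Controlling the noise term.} By duality, $\langle\Sigma,\Delta\rangle\le \|\calU_1\Sigma\|_{op}\|\calU_1\Delta\|_*$, and similarly with $\calU_2$. I would invoke Lemma~\ref{lem:noise}, an operator-norm bound on $\calU_1\Sigma$ and $\calU_2\Sigma$ obtained from a Hilbert-valued matrix Bernstein inequality using Assumption~\ref{ass:kernel} and $\Delta_N$. This shows that, with probability at least $1-\kappa/d$, $\frac{2}{m_1m_2}\|\calU_j\Sigma\|_{op}\le \lambda\beta_j/2$ for $j=1,2$ under the stated choice of $\lambda\beta_1,\lambda\beta_2$. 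I would split $\langle\Sigma,\Delta\rangle$ as a convex combination across the two unfoldings, which is why only $\lambda\beta_1+\lambda\beta_2$ enters the condition.

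\textbf{Decomposability.} For $\calU_j\mu^*$ of rank $r_j$, let $P_j$ be the projection onto the column and row spaces, with $\calU_j\Delta = P_j(\calU_j\Delta)+P_j^\perp(\calU_j\Delta)$. The standard argument gives
\[
\|\calU_j\mu^*\|_*-\|\calU_j\hat\mu\|_*\le \|P_j\calU_j\Delta\|_*-\|P_j^\perp\calU_j\Delta\|_*,
\qquad
\|P_j\calU_j\Delta\|_*\le\sqrt{2r_j}\,\|\Delta\|_F .
\]
For $j=1,2$ the noise term absorbs half of the $P_j^\perp$ part. For $j=3$ I would simply drop the negative part, or alternatively use the trivial bound $\|\calU_3\mu^*\|_*-\|\calU_3\hat\mu\|_*\le\|\calU_3\Delta\|_*\le\sqrt{r_3}\cdots$.

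The $\min$ in the statement comes from two routes. In the first, $j=1,2$ carry the rank bounds and $j=3$ contributes only a nonpositive difference bounded through its projection part. In the second, the roles are swapped, because the $\calU_3$ projection part is bounded by $\sqrt{2r_3}\|\Delta\|_F$ while the $j=1,2$ differences are controlled by the noise absorption. Either route yields
\[
\frac{1}{m_1m_2}\sum T_{ij}\|\Delta\|^2 \lesssim \lambda\sum_{j\in S}\beta_j\sqrt{r_j}\,\|\Delta\|_F .
\]
It also yields a cone condition bounding $\|\calU_j\Delta\|_*$ by $\sqrt{r_j}\|\Delta\|_F$ up to constants.

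\textbf{Restricted strong convexity (main obstacle).} This step passes from the observed-entry loss to the full Frobenius norm. I would use Lemma~\ref{lem:frobenius}, which on the constraint set $\sup\|\Delta(i,j)\|\le 2\alpha$ and a nuclear-norm cone should give
\[
\frac{1}{m_1m_2}\sum T_{ij}\|\Delta\|^2 \ge \frac{\pi_L}{2m_1m_2}\|\Delta\|_F^2 - c\,(2\alpha)^2\sqrt{\frac{\pi_U\log d}{m_1m_2}} .
\]
Its proof requires a peeling argument together with symmetrization of the fiberwise quadratic form, bounded via the operator norm of a Rademacher-weighted $\calU_1$ unfolding. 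This is the delicate part, since missingness acts on whole $\calH$-fibers, and it is why the additive $\alpha^2$ term appears.

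\textbf{Combining.} Write $x=\|\Delta\|_F/\sqrt{m_1m_2}$. Combining the lower bound with the upper bound gives
\[
\pi_L x^2\lesssim \lambda\sqrt{m_1m_2}\sum_{j\in S}\beta_j\sqrt{r_j}\,x+\alpha^2\sqrt{\pi_U\log d/(m_1m_2)} .
\]
Solving this quadratic inequality yields $x^2\lesssim \lambda^2\sum_{j\in S}\beta_j^2 m_1m_2r_j/\pi_L^2$ plus the additive term, and minimizing over the two routes gives the $\min$. Finally, substituting $\lambda\beta_j\asymp \sqrt{M\pi_U\log d}\max\{\alpha,\Delta_N\sqrt{\log d}\}/(m_1m_2)$ gives $\lambda^2\beta_j^2 m_1m_2 = M\pi_U\log d\,\max\{\cdot\}^2/(m_1m_2)=\pi_U\log d\,\max\{\cdot\}^2/m$. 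This produces the second display, with $\log^2 d$ absorbing the $\sqrt{\log d}$ factor inside the max.
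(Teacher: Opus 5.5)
Your skeleton is the paper's: the basic inequality, decomposability of $\|\calU_j\mu^*\|_*$, Lemma~\ref{lem:noise} split as a $\lambda\beta_j/(\lambda\beta_1+\lambda\beta_2)$-weighted combination over modes 1 and 2, Lemma~\ref{lem:frobenius}, and a quadratic inequality in $\|\Delta\|_F$. (The weighted combination, not a per-mode bound $\frac{2}{m_1m_2}\|\calU_j\Sigma\|_{op}\le\lambda\beta_j/2$, is what the stated condition gives.)

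The genuine gap is the $\min$ in the first display; neither of your two routes delivers it.

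\emph{Route 1.} The mode-3 difference $\lambda\beta_3(\|\calU_3\mu^*\|_*-\|\calU_3\hat\mu\|_*)$ is not nonpositive. Decomposability only bounds it by $\lambda\beta_3(\|P_3\calU_3\Delta\|_*-\|P_3^\perp\calU_3\Delta\|_*)\le\lambda\beta_3\sqrt{2r_3}\,\|\Delta\|_F$, and this term stays in the bound.

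\emph{Route 2.} Absorbing the noise only cancels the noise term against half of $\lambda\beta_j\|\calU_j\Delta\|_*$, $j=1,2$. It does nothing to the penalty differences $\lambda\beta_j(\|\calU_j\mu^*\|_*-\|\calU_j\hat\mu\|_*)$, which can be as large as $\lambda\beta_j\sqrt{2r_j}\|\Delta\|_F$. Nor can you shift the work to mode 3. Lemma~\ref{lem:frobenius} has no mode-3 version: mode-3 low rank carries no information across $(i,j)$ positions, and with a one-dimensional $\calH$ it holds automatically. The mode-3 bound in Lemma~\ref{lem:noise} lives at the scale $\sqrt{m_1m_2\pi_U}\Delta_N$, which the stated condition does not impose on $\lambda\beta_3$.

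What your argument actually establishes is
\[
\frac{1}{m_1m_2}\|\Delta\|_F^2\lesssim\frac{m_1m_2}{\pi_L^2}\sum_{j=1}^{3}\lambda^2\beta_j^2r_j+(2\alpha)^2\sqrt{\frac{\pi_U\log d}{\pi_L^2m_1m_2}},
\]
which is a sum (equivalently, up to constants, a max) rather than a min.

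This is not merely presentational: for arbitrary $\beta_3$ the $\min$ cannot hold. Let $\beta_3\to0$ with $\lambda\beta_1,\lambda\beta_2$ held at the prescribed level. The first display would then bound the error by the additive $\alpha^2$ term alone. Take a linear kernel, so $\calH$ is one-dimensional and the problem reduces to scalar low-rank completion, and sparse sampling such as $\pi\asymp m^{-1/2}$ with $m_1=m_2=m$. That additive term is then of order $\sqrt{\log m}\,m^{-3/4}$, below the minimax rate of order $m^{-1/2}$.

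The paper's proof does not supply the missing step either. It asserts the $\min$ at exactly this point, and its very next line bounds that expression by $(\lambda\beta_1\sqrt{r_1}+\lambda\beta_2\sqrt{r_2}+\lambda\beta_3\sqrt{r_3})\|\Delta\|_F$, i.e.\ by the sum.

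The second display survives. When all $\lambda\beta_j$ are of the same order and the ranks are constants, the sum and the $\min$ agree up to constants. Your substitution $\lambda^2\beta_j^2m_1m_2=\pi_U\log d\,\max\{\cdot\}^2/m$ then gives the stated rate.
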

Theorem~\ref{thm:low} improves upon the convergence rate established in Theorem~\ref{thm:approximately} by fully exploiting the exact low-rank structure of $\mu^*$.
Moreover, the resulting rate matches that obtained for classical noisy low-rank matrix completion with nuclear-norm regularization \citep{klopp,negahban2012restricted}, which is known to be minimax optimal up to logarithmic factors in the scalar-valued setting.

\section{Simulation Studies}
\label{sec:simu}


In this section, we present numerical experiments to illustrate the empirical performance of the proposed method.
We generate synthetic data according to the following procedure.
We first construct a matrix of distributions of dimension $m_1\times m_2$.
Fix $r_1=r_2=3$.
For each simulation run, we generate parameters
$a_l, v_l \overset{\mathrm{i.i.d.}}{\sim} \mathrm{Unif}[0,5]$ for $\l=1,\dots,r_1r_2$.
Let $g=[0.5,0.5,0.5]^\tp$, and draw mixing weights
$w_k \overset{\mathrm{i.i.d.}}{\sim} \mathrm{Dirichlet}(g)$ for $k=1,\dots,m_1$
and
$w'_{k'} \overset{\mathrm{i.i.d.}}{\sim} \mathrm{Dirichlet}(g)$ for $k'=1,\dots,m_2$.
Each matrix entry $(i,j)$ is then assigned a Gaussian mixture distribution
\[
 \sum_{l_1=1}^{r_1} \sum_{l_2=1}^{r_2} w_{i,l_1} w'_{j,l_2} \mathcal{N}(a_{l_2 + r_1(l_1-1)}, v_{l_2 + r_1(l_1-1)}).
\]
Next, we generate an observation indicator matrix $T$ by uniformly sampling entries with observation probability $p$.
For each observed entry $(i,j)$ with $T(i,j)=1$, we draw $N$ i.i.d.~samples from the corresponding distribution $\rho(i,j)$.

We compare our proposed estimator (\prop)  with an existing estimator \citet{feitelberg2024distributional} (\NN) using the following relative metrics defined on the whole matrix and unobserved entries:
\begin{align*}
    \mathrm{Re}(\mathrm{MMD}) = \frac{\sum_{i,j} \| \mu(i,j,\cdot) - \hat{\mu} (i,j,\cdot) \|^2_\calH}{\sum_{i,j} \| \mu(i,j,\cdot) \|^2_\calH},\\
    \mathrm{Re}_{t}(\mathrm{MMD}) = \frac{\sum_{T(i,j)=0} \| \mu(i,j,\cdot) - \hat{\mu} (i,j,\cdot) \|^2_\calH}{\sum_{T(i,j)=0} \| \mu(i,j,\cdot) \|^2_\calH},
\end{align*}
where $\hat{\mu}$ is a general notation for the estimated KME.  Beyond distributional accuracy, various summary statistics can be derived from the estimated \(\hat{\mu}\) (or \(\hat{\rho}\))
for evaluation purpose.
In particular, we compute the estimated mean matrix \(\hat{M}\) and variance matrix \(\hat{V}\), and compare them with the corresponding population mean and variance matrices \(M\) and \(V\). We also compute mean and variance estimators using classical matrix completion with nuclear-norm regularization (\soft), treating the sample means and sample variances as observed scalar entries.
The relative Frobenius norm errors are defined as
\small
\begin{align*}
     \mathrm{Re}(M) = \frac{ \| M  - \hat{M} \|^2_F}{ \| M\|^2_F};
    \mathrm{Re}_{t}(M) = \frac{ \| (1-T)(M  - \hat{M}) \|^2_F}{ \| (1-T)M\|^2_F};\\
    \mathrm{Re}(V) = \frac{ \|V  - \hat{V} \|^2_F}{ \| V\|^2_F};
    \mathrm{Re}_{t}(V) = \frac{ \| (1-T)(V  - \hat{V}) \|^2_F}{ \| (1-T)V\|^2_F};
\end{align*}
\normalsize
where $\|A\|_F$ refers to the Frobenius norm of matrix $A$.  In our implementation, we employ the Gaussian kernel to construct KMEs and select the kernel bandwidth using the widely adopted {median heuristic} \citep{garreau2017large}.
For simplicity in tuning parameter selection, we set \(\beta_j =1/3, j=1,2,3,\) in \eqref{eqn:reg_nuclear} and choose the regularization parameter \(\lambda\) via cross-validation.
Cross-validation is also used to select tuning parameters for all competing methods considered in our experiments.

Table~\ref{tab:1dim} summarizes the results for the setting $m_1 = m_2 = 100$, $p = 0.6$, and $N = 5$, based on $100$ simulated datasets. As shown, the proposed method $\prop$ consistently outperforms $\NN$ across all evaluation metrics, with particularly notable improvements on the unobserved entries. For both mean and variance matrix estimation, $\prop$ also achieves better performance than $\soft$, which may be attributed to its ability to incorporate richer distributional information and to exploit additional structural properties of the data. While $\NN$ performs reasonably well in estimating the mean, it fails to deliver competitive variance estimates.

In addition to the one-dimensional setting, we also consider a two-dimensional setting.
Specifically, we construct mean vectors \(a_\ell \overset{\text{i.i.d.}}{\sim} \mathrm{Unif}([0,1]^2)\) and covariance matrices \(V_\ell = A A_\ell^\top\), where \(A_\ell \in \mathbb{R}^{2 \times 2}\) has entries independently generated from a standard Gaussian distribution, $l = 1,\dots, r_1r_2$. 
We adopt analogous relative error metrics for evaluation.
In particular, we report relative errors on the entire matrix for the first and second components of the mean matrix, denoted by \(\mathrm{Re}(M1)\) and \(\mathrm{Re}(M2)\), as well as for the covariance matrix, denoted by \(\mathrm{Re}(\mathrm{Cov})\).
Relative errors restricted to the unobserved entries are defined in the same manner.
For the method \(\NN\), however, computing the two-dimensional Wasserstein distance is considerably more challenging and does not readily extend to this setting.
As a result, we exclude \(\NN\) from the comparison in the two-dimensional experiments.
As Table \ref{tab:2dim} shows, \prop{} performs better than \soft by utilizing more structural information on the estimation of summary statistics.


\begin{table}[t]
  \caption{Simulation results with one-dimensional samples for $m_1=m_2=100$, $p=0.6$ and $N= 5$. Values in the parentheses are standard errors. }
  \label{tab:1dim}
  \centering
    \begin{tabular}{lccc}
      \toprule
      Method & \prop & \soft & \NN \\
      \midrule
      $\mathrm{Re}(\mathrm{MMD})$ & 0.064 (0.0006) & N/A & 0.090 (0.0052)\\
      $\mathrm{Re}(M)$            & 0.044 (0.0048) & 0.120 (0.0160) & 0.056 (0.0053) \\
      $\mathrm{Re}(V)$            & 0.076 (0.0029) & 0.305 (0.0064) & 0.177 (0.0052) \\
      \bottomrule
        $\mathrm{Re}_t(\mathrm{MMD})$ & 0.039 (0.0005) & N/A & 0.090 (0.0052)  \\
      $\mathrm{Re}_t(M)$            & 0.029 (0.0025) & 0.050 (0.0076) & 0.056 (0.0053) \\
      $\mathrm{Re}_t(V)$            &  0.044 (0.0043) & 0.186 (0.0037) & 0.177 (0.0052) \\
      \bottomrule
    \end{tabular}
\end{table}
\begin{table}[t]
  \caption{Simulation results with two-dimensional samples for $m_1=m_2=50$, $p=0.3$ and $N= 80$. Values in the parentheses are standard errors. }
  \label{tab:2dim}
  \centering
    \begin{tabular}{lcc}
      \toprule
      Method & \prop & \soft \\
      \midrule
      $\mathrm{Re}(\mathrm{MMD})$ & 0.0161 (0.0001) & N/A\\
      $\mathrm{Re}(M1)$            & 0.0201 (0.0011)&  0.0349 (0.0014)  \\
      $\mathrm{Re}(M2)$& 0.0183 (0.0008) & 0.0346 (0.0016) \\
      $\mathrm{Re}(Cov)$            & 0.2129 (0.0141) & 0.2337 (0.0091)\\
      \bottomrule
        $\mathrm{Re}_t(\mathrm{MMD})$ & 0.0156 (0.0001) &  N/A\\
      $\mathrm{Re}_t(M1)$            & 0.0179 (0.0010) & 0.0327 (0.0014)  \\
      $\mathrm{Re}_t(M2)$& 0.0158 (0.0008) & 0.0322 (0.0015) \\
      $\mathrm{Re}_t(Cov)$            & 0.1782 (0.0131) &  0.2350 (0.0092) \\
      \bottomrule
    \end{tabular}
  \vspace{-0.4cm}
\end{table}

\section{Real Data Application}
\label{sec:real}

In this section, we analyze the NYC taxi trip dataset
(\url{https://www.nyc.gov/site/tlc/about/tlc-trip-record-data.page})
to illustrate the performance of the proposed method.
We adopt the cleaned and organized data directly from the public repository
\url{https://github.com/wooner49/sofia}.
The processed dataset records source locations, destination locations, and the number of taxi trips per day between each source-destination pair over multiple days.


We treat source and destination locations as matrix indices, yielding a $265 \times 265$ matrix, where each entry represents the distribution of daily trip counts between a given source--destination pair. Due to data collection and preprocessing issues---such as missing or invalid location identifiers, timestamps, or corrupted logs---many entries are unobserved. The overall missing rate exceeds $39\%$, and the number of samples per observed entry varies widely, ranging from $1$ to $3672$. For evaluation, we randomly select $90\%$ of the observed entries for training and reserve the remaining $10\%$ as the test set $\mathcal{T}$.

We compared the two distributional matrix completion methods \prop{} (our proposal) and \NN{} based on
the empirical relative error
\begin{align*}
&\hat{\mathrm{Re}}_t(\mathrm{MMD})\\
&=
      \frac{\sum_{(i,j)\in \calT}\|\frac{1}{N_{ij}}\sum_{l=1}^{N_{ij}}K(X_{l},\cdot) - \hat{\mu}(i,j,\cdot)\|_\calH^2}{\sum_{(i,j)\in \calT}\|\sum_{l=1}^{N_{ij}}K(X_{l},\cdot)/N_{ij}\|_\calH^2},
\end{align*}
where $\hat{\mu}$ is a general notation for the estimated KME.  As Table \ref{tab:real} shows,  \prop{} achieves much smaller $\hat{\mathrm{Re}}_t(\mathrm{MMD})$ than \NN. 
\begin{table}[h]
\vspace{-0.2cm}
  \caption{Empirical relative error for the test set.}
  \label{tab:real}
  \centering
    \begin{tabular}{lcc}
      \toprule
      Method & \prop & \NN \\
      \midrule
      $\hat{\mathrm{Re}}_t(\mathrm{MMD})$ & 0.0015 & 0.0212 \\
      \bottomrule
    \end{tabular}
  \vspace{-0.4cm}
\end{table}

We take a closer look at two representative test cases in which the number of observed samples exceeds 50.
The first case exhibits a relatively heavy-tailed distribution, as shown in Figure~\ref{fig:case1}(a), with values ranging from 1 to above 25, whereas the second case is highly concentrated around 1, as illustrated in Figure~\ref{fig:case2}(a).

For test case~1, the KME estimated by \(\prop\) closely matches the shape of the empirical KME.
In particular, both display relatively large values in the tail region when \(x\) is large, whereas the KME estimated by \(\NN\) is predominantly concentrated at smaller values.
To facilitate a more direct comparison, we further transform the KME estimated by \(\prop\) into a probability density function via numerical inversion (see Appendix~\ref{sec:num_inversion} for details).
As shown in Figure~\ref{fig:case1}(d), the resulting density function from \(\prop\) exhibits a pattern consistent with the histogram of the observed samples.
In contrast, the histogram derived from the empirical samples used by \(\NN\) (Figure \ref{fig:case1}(c)) is largely concentrated at value 1 and fails to capture the distributional mass beyond 11.

For test case~2, the KME estimated by \(\prop\) exhibits the same overall trend as the empirical KME, attaining its maximum at value 1 and then decreasing.
In contrast, the KME estimated by \(\NN\) reaches its maximum at a value greater than 1.5.
This discrepancy is also evident from the histogram of the empirical samples used by \(\NN\) (Figure~\ref{fig:case2}(c)), which has its mode at value 2 and extends beyond value 3.
By comparison, the density estimated from \(\prop\) (Figure~\ref{fig:case2}(d)) assigns a dominantly high density to value 1, more accurately reflecting the underlying empirical distribution.

\begin{figure}[ht]
  \centering
  \begin{subfigure}{0.48\linewidth}
    \includegraphics[width=\linewidth]{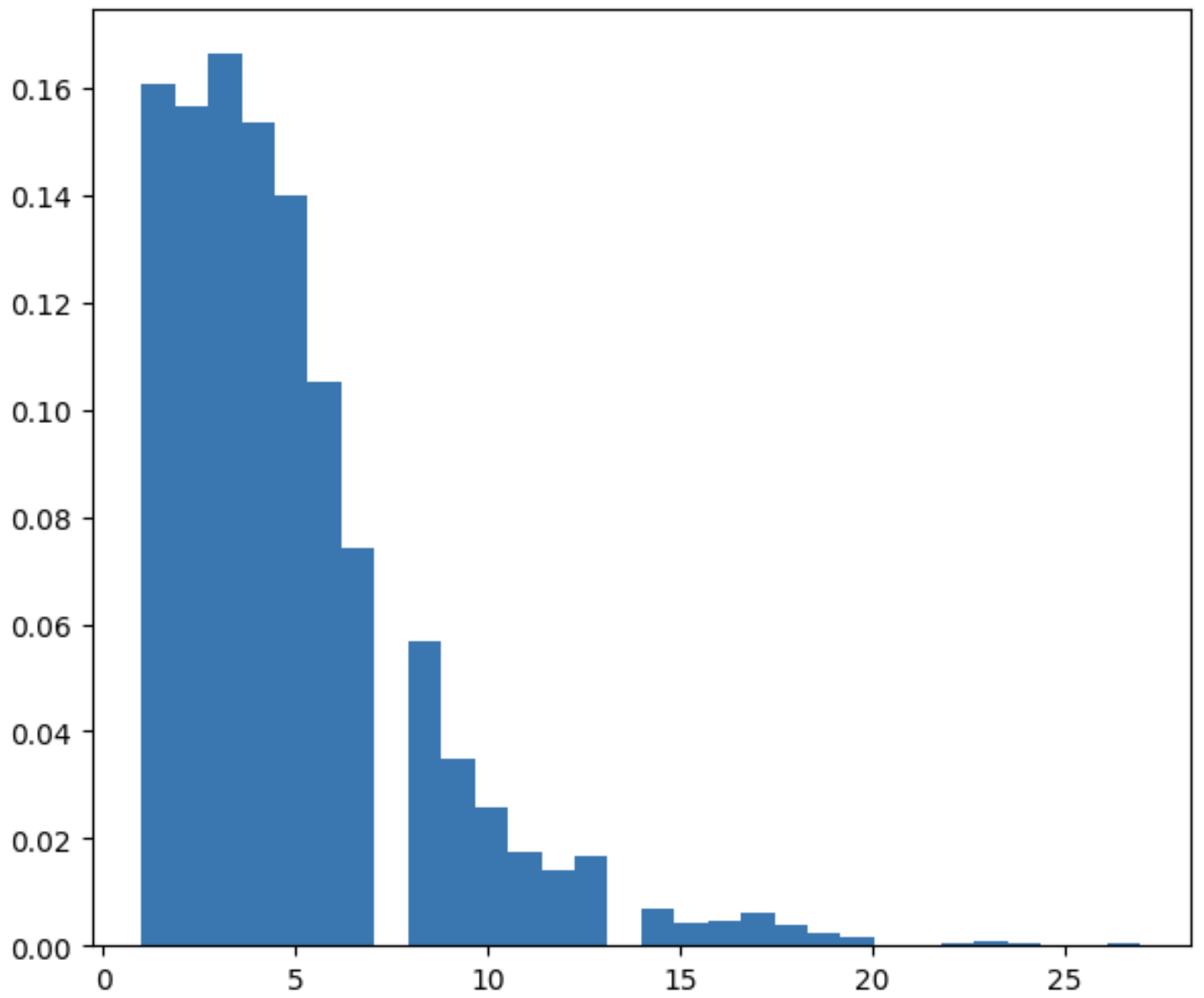}
    \caption{Histogram for observed samples.}
  \end{subfigure}
  \hfill
  \begin{subfigure}{0.48\linewidth}
    \includegraphics[width=\linewidth]{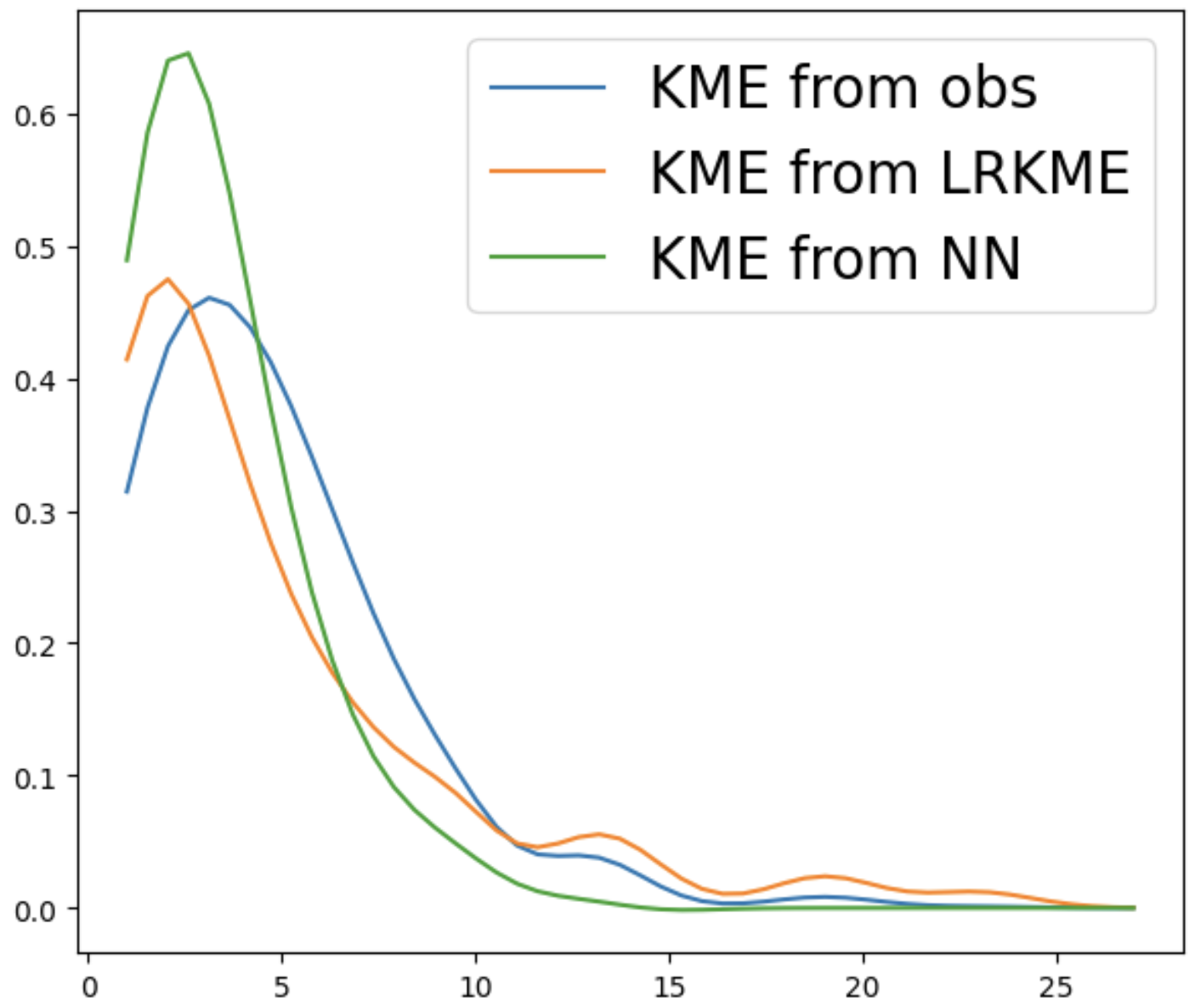}
    \caption{Empirical KME, estimated KMEs from \NN and \prop{}.}
  \end{subfigure}

  \medskip

  \begin{subfigure}{0.48\linewidth}
    \includegraphics[width=\linewidth]{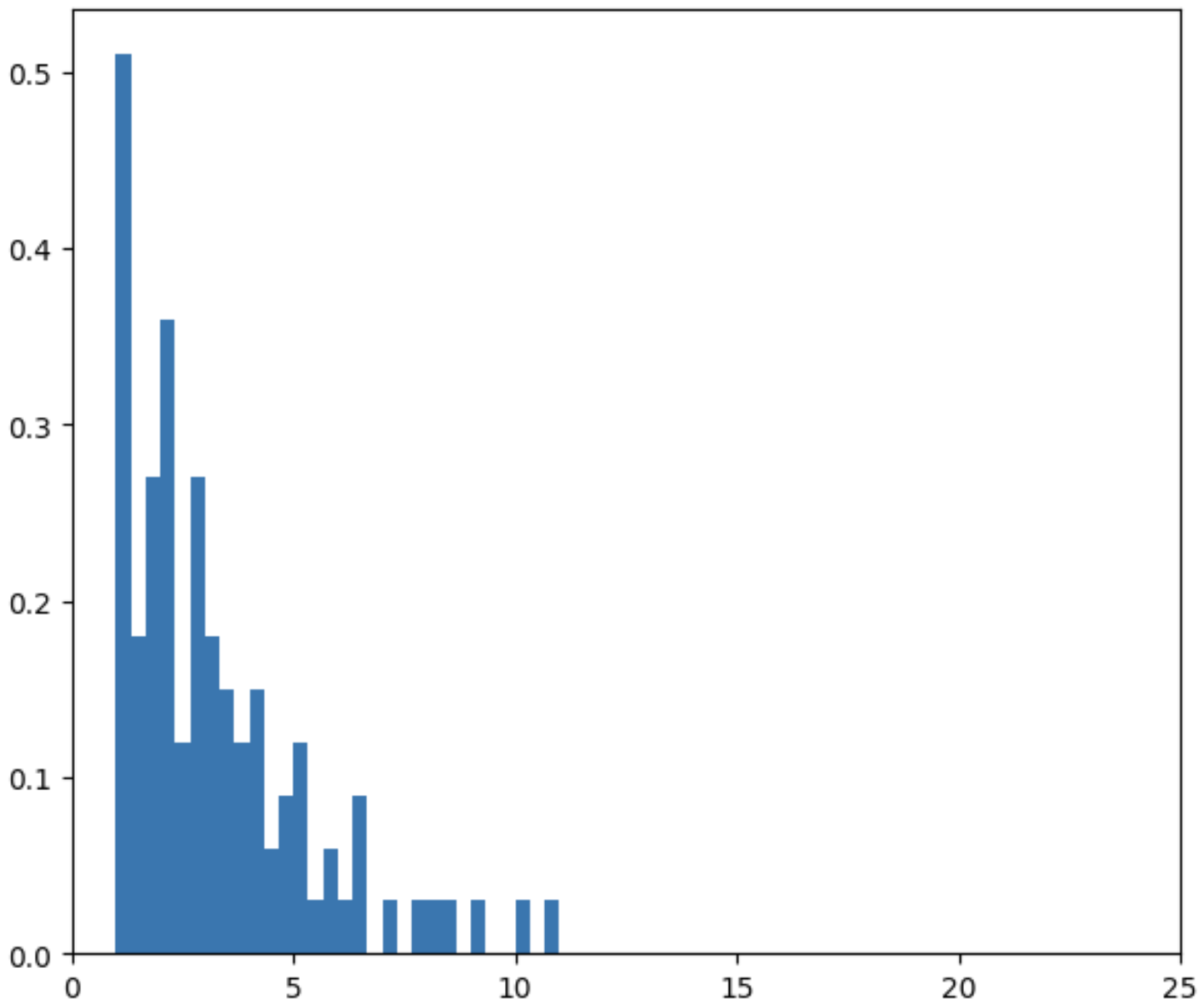}
    \caption{Histogram for empirical samples estimated from \NN. }
  \end{subfigure}
  \hfill
  \begin{subfigure}{0.48\linewidth}
    \includegraphics[width=\linewidth]{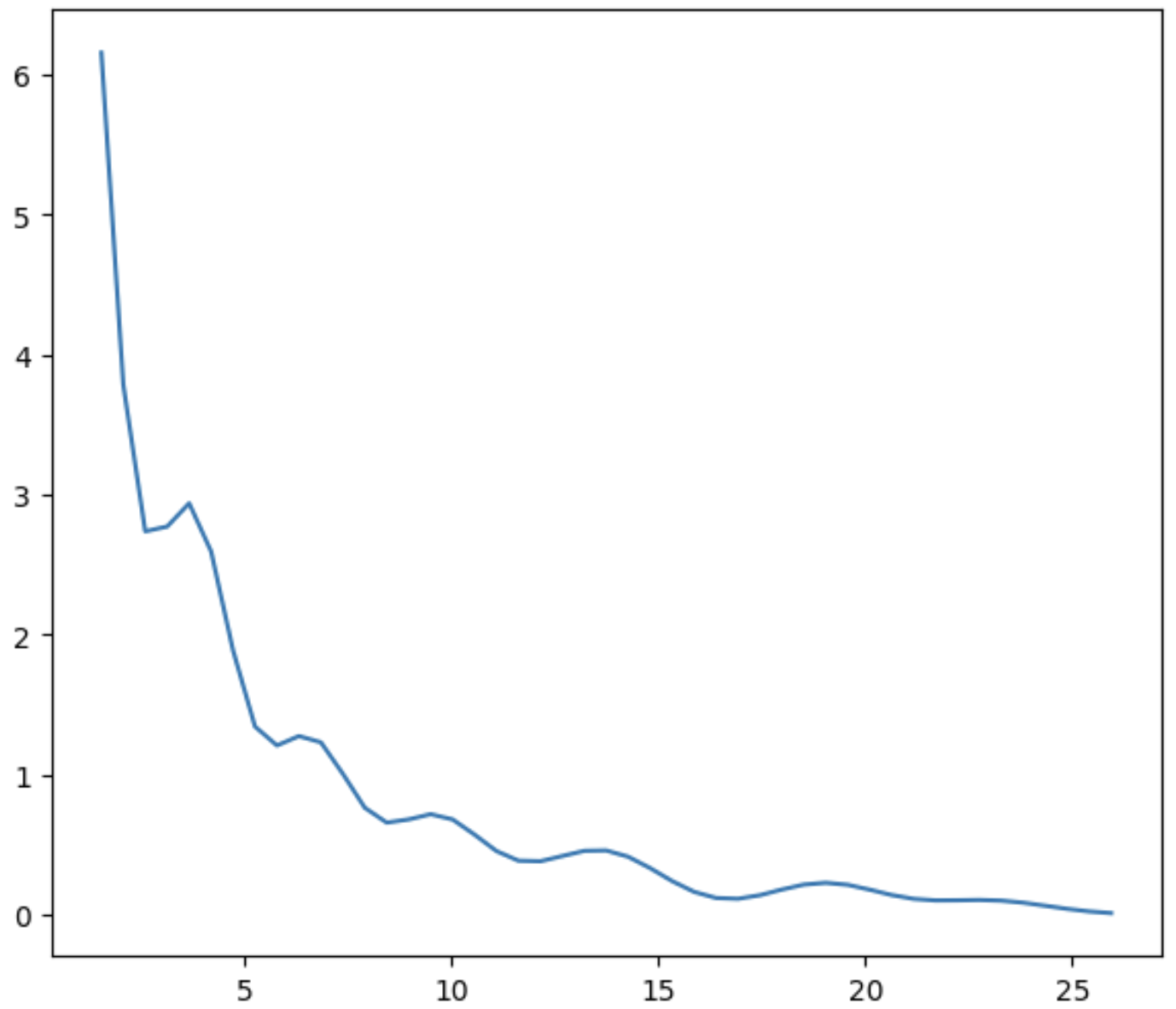}
    \caption{Estimated density from estimated KME from \prop.}
  \end{subfigure}
  \caption{Results for test case 1.}
  \label{fig:case1}
\end{figure}

\begin{figure}[H]
  \centering
  \begin{subfigure}{0.48\linewidth}
    \includegraphics[width=\linewidth]{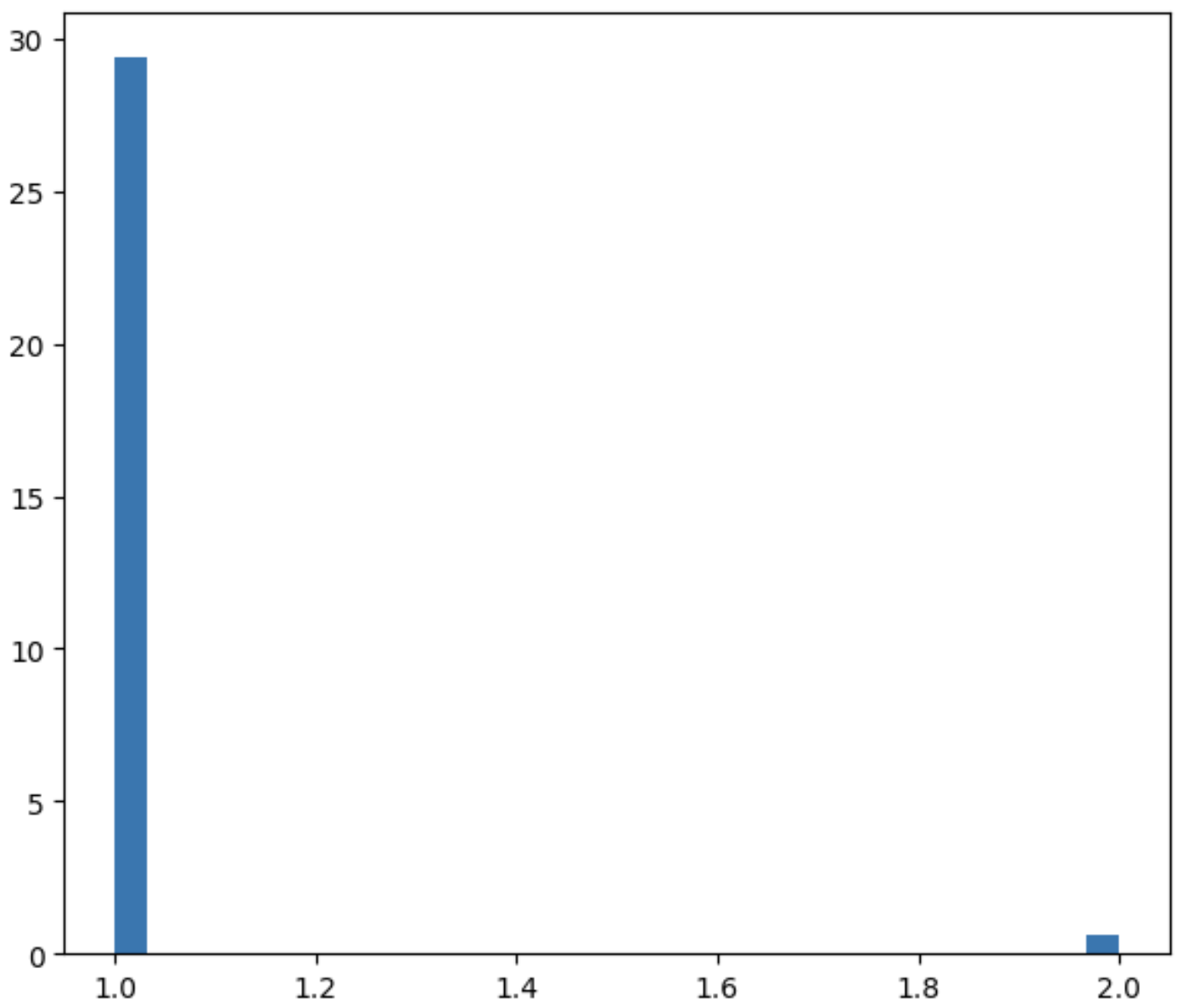}
    \caption{Histogram for observed samples.}
  \end{subfigure}
  \hfill
  \begin{subfigure}{0.48\linewidth}
    \includegraphics[width=\linewidth]{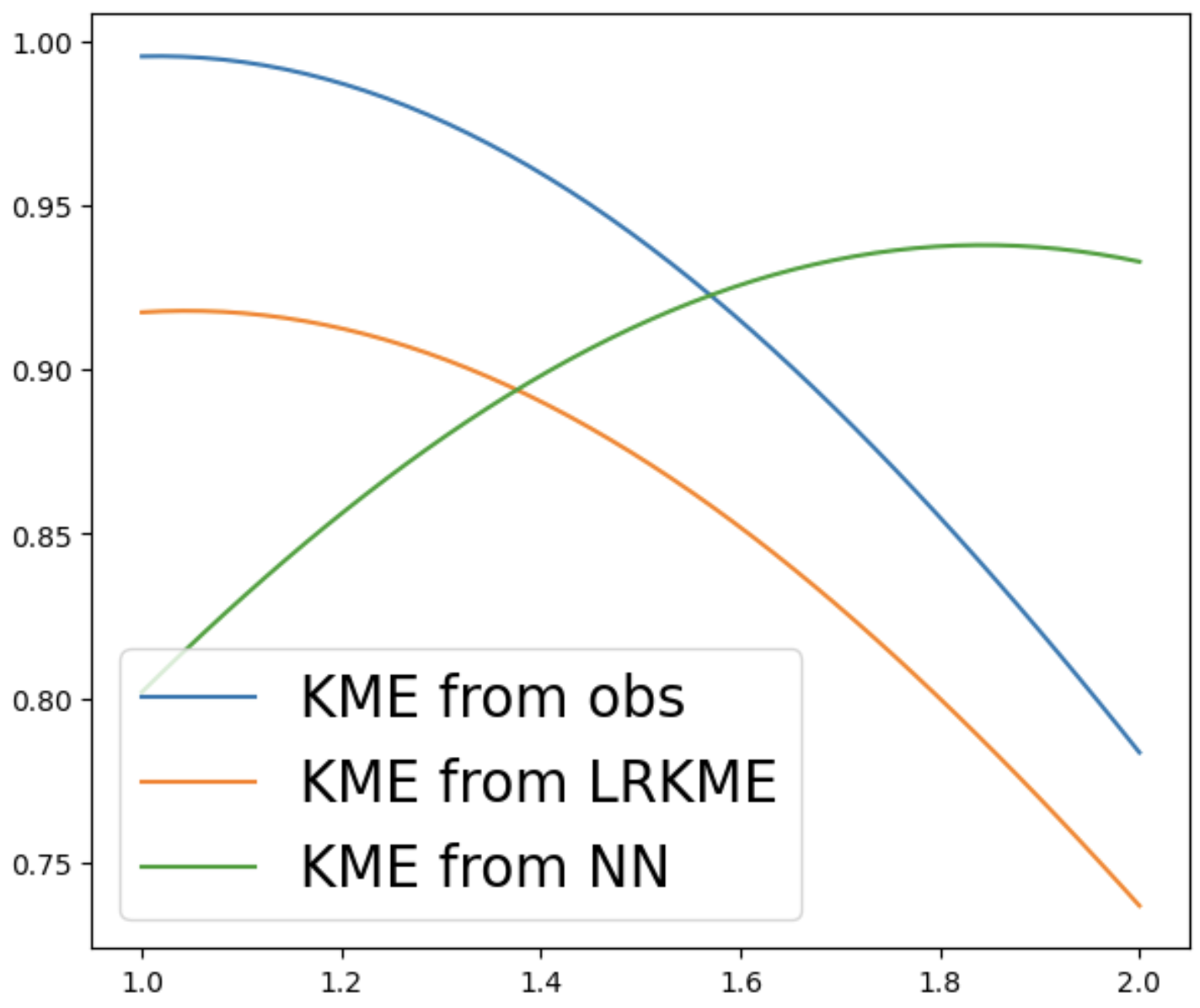}
    \caption{Empirical KME, estimated KMEs from \NN and \prop{}.}
  \end{subfigure}
  \medskip
  \begin{subfigure}{0.48\linewidth}
    \includegraphics[width=\linewidth]{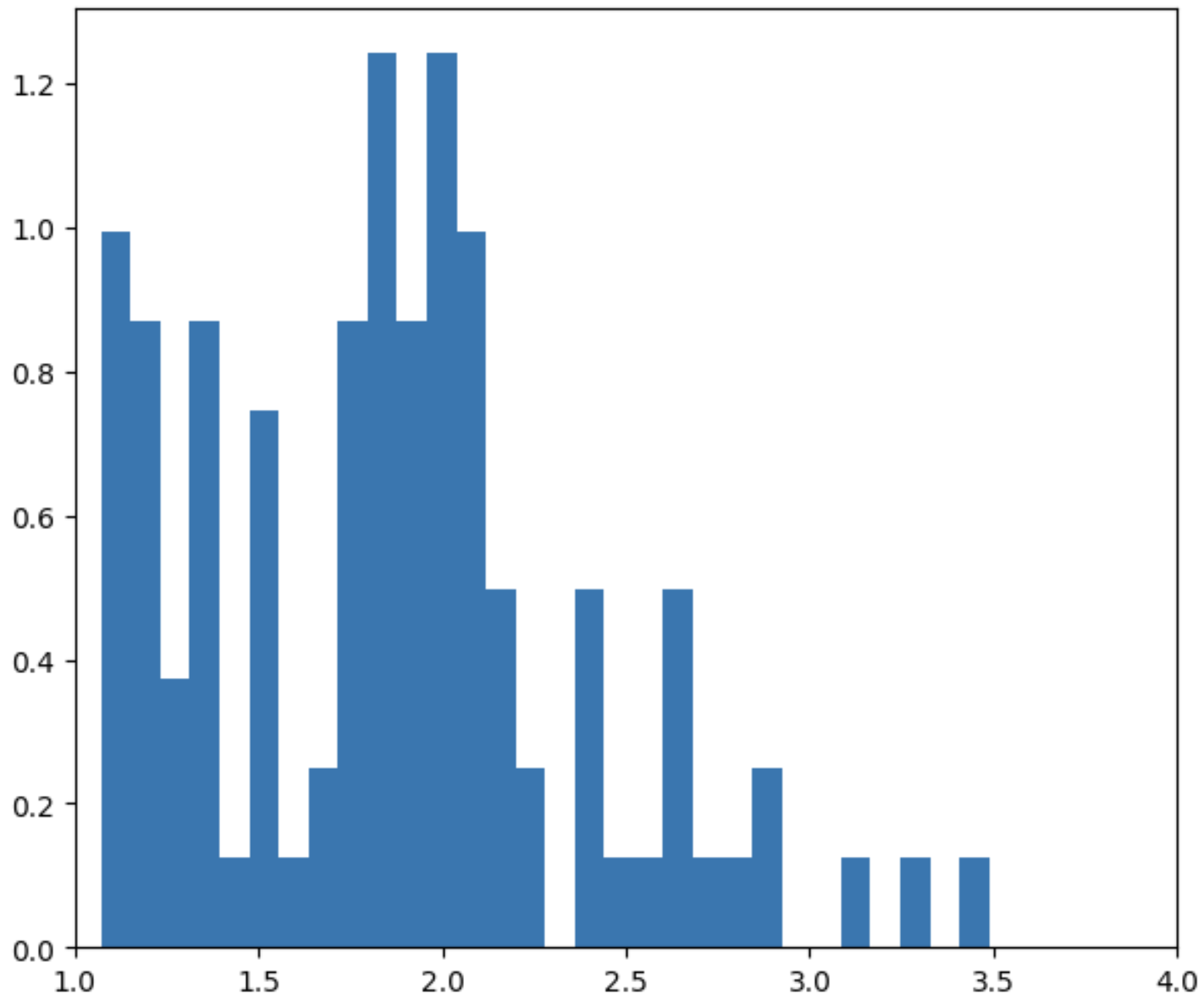}
    \caption{Histogram for empirical samples estimated from \NN. }
  \end{subfigure}
  \hfill
  \begin{subfigure}{0.48\linewidth}
    \includegraphics[width=\linewidth]{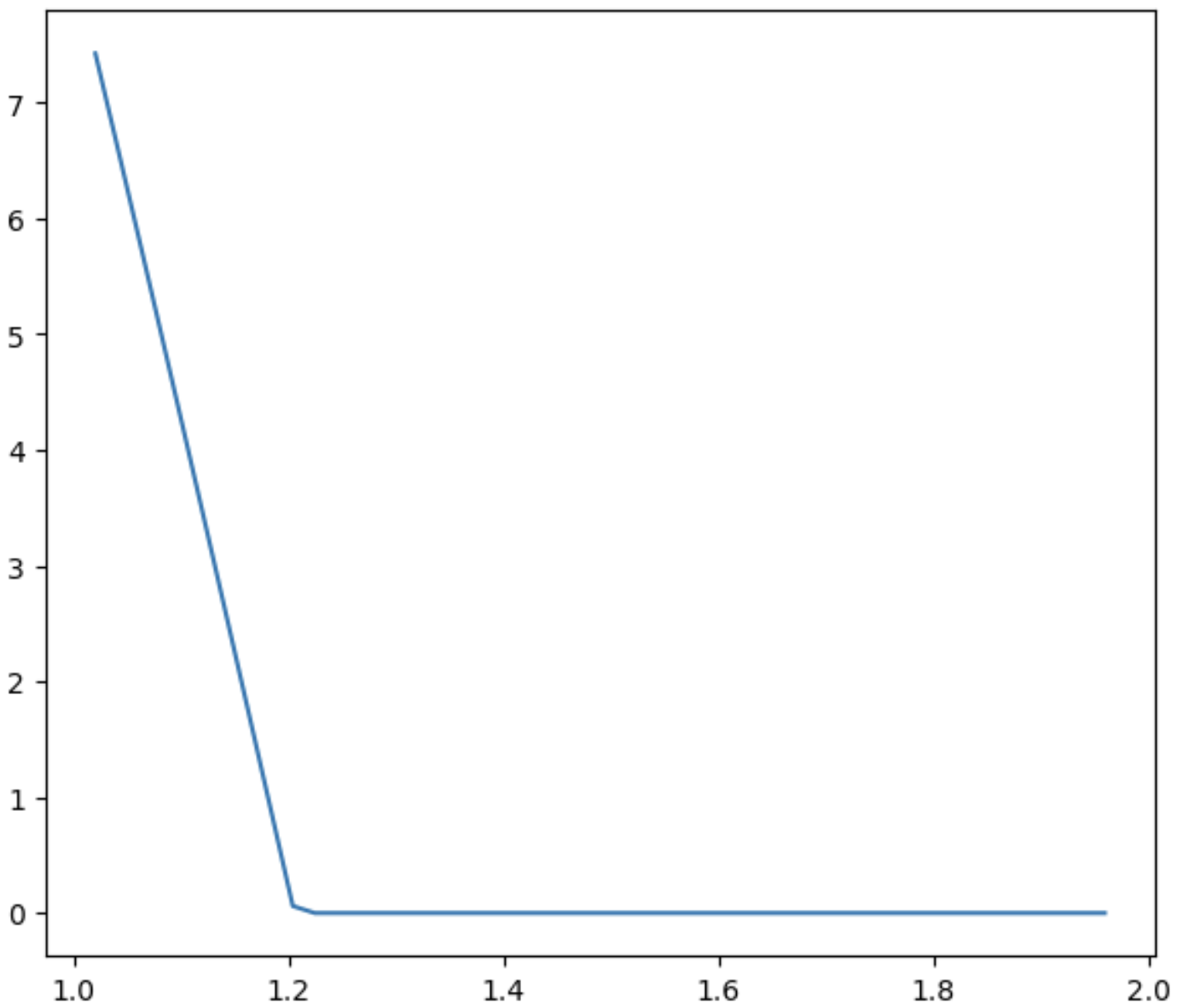}
    \caption{Estimated density from estimated KME from \prop.}
  \end{subfigure}
  \caption{Results for test case 2.}
  \label{fig:case2}
\end{figure}

\bibliographystyle{plainnat}
\bibliography{refs}

\begin{thebibliography}{43}
\providecommand{\natexlab}[1]{#1}
\providecommand{\url}[1]{\texttt{#1}}
\expandafter\ifx\csname urlstyle\endcsname\relax
  \providecommand{\doi}[1]{doi: #1}\else
  \providecommand{\doi}{doi: \begingroup \urlstyle{rm}\Url}\fi

\bibitem[Boyd et~al.(2010)Boyd, Parikh, Chu, Peleato, and
  Eckstein]{Boyd-Parikh-Chu10}
Stephen Boyd, Neal Parikh, Eric Chu, Borja Peleato, and Jonathan Eckstein.
\newblock Distributed optimization and statistical learning via the alternating
  direction method of multipliers.
\newblock \emph{Foundations and Trends{\textregistered} in Machine Learning},
  3\penalty0 (1):\penalty0 1--122, 2010.

\bibitem[Candes and Plan(2010)]{candes2010matrix}
Emmanuel~J Candes and Yaniv Plan.
\newblock Matrix completion with noise.
\newblock \emph{Proceedings of the IEEE}, 98\penalty0 (6):\penalty0 925--936,
  2010.

\bibitem[Cao et~al.(2014)Cao, Cai, and Tan]{cao2014image}
Feilong Cao, Miaomiao Cai, and Yuanpeng Tan.
\newblock Image interpolation via low-rank matrix completion and recovery.
\newblock \emph{IEEE Transactions on Circuits and Systems for Video
  Technology}, 25\penalty0 (8):\penalty0 1261--1270, 2014.

\bibitem[Changjun et~al.(2012)Changjun, Xiangyang, Yongbing, and
  Qionghai]{changjun2012single}
Fu~Changjun, Ji~Xiangyang, Zhang Yongbing, and Dai Qionghai.
\newblock A single frame super-resolution method based on matrix completion.
\newblock In \emph{2012 Data Compression Conference}, pages 297--306. IEEE,
  2012.

\bibitem[Chen et~al.(2023)Chen, Wang, Ng, and Wang]{chen2023high}
Junren Chen, Cheng-Long Wang, Michael~K Ng, and Di~Wang.
\newblock High dimensional statistical estimation under uniformly dithered
  one-bit quantization.
\newblock \emph{IEEE Transactions on Information Theory}, 2023.

\bibitem[Chen et~al.(2017)Chen, Wei, Li, Liang, Cai, and
  Zhang]{chen2017ensemble}
Xiaobo Chen, Zhongjie Wei, Zuoyong Li, Jun Liang, Yingfeng Cai, and Bob Zhang.
\newblock Ensemble correlation-based low-rank matrix completion with
  applications to traffic data imputation.
\newblock \emph{Knowledge-Based Systems}, 132:\penalty0 249--262, 2017.

\bibitem[Chen et~al.(2020)Chen, Yang, and Sun]{chen2020nonconvex}
Xinyu Chen, Jinming Yang, and Lijun Sun.
\newblock A nonconvex low-rank tensor completion model for spatiotemporal
  traffic data imputation.
\newblock \emph{Transportation Research Part C: Emerging Technologies},
  117:\penalty0 102673, 2020.

\bibitem[Chen et~al.(2021)Chen, Zhao, and Wang]{chen2021kernel}
Zhaoliang Chen, Wei Zhao, and Shiping Wang.
\newblock Kernel meets recommender systems: A multi-kernel interpolation for
  matrix completion.
\newblock \emph{Expert Systems with Applications}, 168:\penalty0 114436, 2021.

\bibitem[Drineas and Mahoney(2005)]{drineas2005nystrom}
Petros Drineas and Michael~W. Mahoney.
\newblock On the nystr{\"o}m method for approximating a gram matrix for
  improved kernel-based learning.
\newblock \emph{Journal of Machine Learning Research}, 6:\penalty0 2153--2175,
  2005.

\bibitem[Feitelberg et~al.(2024)Feitelberg, Choi, Agarwal, and
  Dwivedi]{feitelberg2024distributional}
Jacob Feitelberg, Kyuseong Choi, Anish Agarwal, and Raaz Dwivedi.
\newblock Distributional matrix completion via nearest neighbors in the
  wasserstein space.
\newblock \emph{arXiv preprint arXiv:2410.13112}, 2024.

\bibitem[Gandy et~al.(2011)Gandy, Recht, and Yamada]{gandy2011tensor}
Silvia Gandy, Benjamin Recht, and Isao Yamada.
\newblock Tensor completion and low-n-rank tensor recovery via convex
  optimization.
\newblock \emph{Inverse Problems}, 27\penalty0 (2):\penalty0 025010, 2011.

\bibitem[Garreau et~al.(2017)Garreau, Jitkrittum, and
  Kanagawa]{garreau2017large}
Damien Garreau, Wittawat Jitkrittum, and Motonobu Kanagawa.
\newblock Large sample analysis of the median heuristic.
\newblock \emph{arXiv preprint arXiv:1707.07269}, 2017.

\bibitem[Gretton et~al.(2012)Gretton, Borgwardt, Rasch, Sch{\"o}lkopf, and
  Smola]{gretton2012kernel}
Arthur Gretton, Karsten~M. Borgwardt, Malte~J. Rasch, Bernhard Sch{\"o}lkopf,
  and Alex~J. Smola.
\newblock A kernel two-sample test.
\newblock \emph{Journal of Machine Learning Research}, 13:\penalty0 723--773,
  2012.

\bibitem[Gurini et~al.(2018)Gurini, Gasparetti, Micarelli, and
  Sansonetti]{gurini2018temporal}
Davide~Feltoni Gurini, Fabio Gasparetti, Alessandro Micarelli, and Giuseppe
  Sansonetti.
\newblock Temporal people-to-people recommendation on social networks with
  sentiment-based matrix factorization.
\newblock \emph{Future Generation Computer Systems}, 78:\penalty0 430--439,
  2018.

\bibitem[Han et~al.(2024)Han, Shi, and Zhang]{han2024guaranteed}
Rungang Han, Pixu Shi, and Anru~R Zhang.
\newblock Guaranteed functional tensor singular value decomposition.
\newblock \emph{Journal of the American Statistical Association}, 119\penalty0
  (546):\penalty0 995--1007, 2024.

\bibitem[Kadkhodaie and Montanari(2015)]{kadkhodaie2015accelerated}
Mahdi Kadkhodaie and Andrea Montanari.
\newblock Accelerated alternating direction method of multipliers.
\newblock \emph{arXiv preprint arXiv:1505.01883}, 2015.

\bibitem[Kang et~al.(2016)Kang, Peng, and Cheng]{kang2016top}
Zhao Kang, Chong Peng, and Qiang Cheng.
\newblock Top-n recommender system via matrix completion.
\newblock In \emph{Proceedings of the AAAI conference on artificial
  intelligence}, volume~30, 2016.

\bibitem[Klopp(2014)]{klopp}
Olga Klopp.
\newblock Noisy low-rank matrix completion with general sampling distribution.
\newblock \emph{Bernoulli}, 2014.

\bibitem[Kolda and Bader(2009)]{kolda2009tensor}
Tamara~G. Kolda and Brett~W. Bader.
\newblock Tensor decompositions and applications.
\newblock \emph{SIAM Review}, 51\penalty0 (3):\penalty0 455--500, 2009.

\bibitem[Koltchinskii(2011)]{koltchinskii2011oracle}
Vladimir Koltchinskii.
\newblock \emph{Oracle inequalities in empirical risk minimization and sparse
  recovery problems: Ecole D’Et{\'e} de Probabilit{\'e}s de Saint-Flour
  XXXVIII-2008}, volume 2033.
\newblock Springer, 2011.

\bibitem[Koltchinskii et~al.(2011)Koltchinskii, Lounici, and
  Tsybakov]{koltchinskii2011nuclear}
Vladimir Koltchinskii, Karim Lounici, and Alexandre~B. Tsybakov.
\newblock Nuclear-norm penalization and optimal rates for noisy low-rank matrix
  completion.
\newblock \emph{The Annals of Statistics}, 39\penalty0 (5):\penalty0
  2302--2329, 2011.

\bibitem[Larsen et~al.(2024)Larsen, Kolda, Zhang, and
  Williams]{larsen2024tensor}
Brett~W Larsen, Tamara~G Kolda, Anru~R Zhang, and Alex~H Williams.
\newblock Tensor decomposition meets rkhs: Efficient algorithms for smooth and
  misaligned data.
\newblock \emph{arXiv preprint arXiv:2408.05677}, 2024.

\bibitem[Li et~al.(2024)Li, Wang, Wong, and Chan]{li2024pairwise}
Jiangyuan Li, Jiayi Wang, Raymond~KW Wong, and Kwun Chuen~Gary Chan.
\newblock A pairwise pseudo-likelihood approach for matrix completion with
  informative missingness.
\newblock \emph{Advances in Neural Information Processing Systems},
  37:\penalty0 10735--10769, 2024.

\bibitem[Liu et~al.(2013)Liu, Musialski, Wonka, and Ye]{liu2013tensor}
Ji~Liu, Przemyslaw Musialski, Peter Wonka, and Jieping Ye.
\newblock Tensor completion for estimating missing values in visual data.
\newblock \emph{IEEE Transactions on Pattern Analysis and Machine
  Intelligence}, 35\penalty0 (1):\penalty0 208--220, 2013.

\bibitem[Mazumder et~al.(2010)Mazumder, Hastie, and
  Tibshirani]{mazumder2010spectral}
Rahul Mazumder, Trevor Hastie, and Robert Tibshirani.
\newblock Spectral regularization algorithms for learning large incomplete
  matrices.
\newblock \emph{The Journal of Machine Learning Research}, 11:\penalty0
  2287--2322, 2010.

\bibitem[Mu et~al.(2014)Mu, Huang, Wright, and Goldfarb]{mu2014square}
Cun Mu, Bo~Huang, John Wright, and Donald Goldfarb.
\newblock Square deal: Lower bounds and improved relaxations for tensor
  recovery.
\newblock In \emph{International Conference on Machine Learning}, pages 73--81,
  2014.

\bibitem[Muandet et~al.(2017)Muandet, Fukumizu, Sriperumbudur, and
  Schölkopf]{muandet2017kernel}
Krikamol Muandet, Kenji Fukumizu, Bharath~K. Sriperumbudur, and Bernhard
  Schölkopf.
\newblock Kernel mean embedding of distributions: A review and beyond.
\newblock \emph{Foundations and Trends in Machine Learning}, 10\penalty0
  (1-2):\penalty0 1--141, 2017.

\bibitem[Negahban and Wainwright(2012)]{negahban2012restricted}
Sahand~N. Negahban and Martin~J. Wainwright.
\newblock Restricted strong convexity and weighted matrix completion: Optimal
  bounds with noise.
\newblock \emph{Journal of Machine Learning Research}, 13:\penalty0 1665--1697,
  2012.

\bibitem[Rahimi and Recht(2007)]{rahimi2007random}
Ali Rahimi and Benjamin Recht.
\newblock Random features for large-scale kernel machines.
\newblock In \emph{Advances in Neural Information Processing Systems}, 2007.

\bibitem[Smola et~al.(2007)Smola, Gretton, Song, and
  Sch{\"o}lkopf]{smola2007hilbert}
Alex~J. Smola, Arthur Gretton, Le~Song, and Bernhard Sch{\"o}lkopf.
\newblock A hilbert space embedding for distributions.
\newblock \emph{Journal of Machine Learning Research}, 7:\penalty0 1449--1472,
  2007.

\bibitem[Sriperumbudur et~al.(2010)Sriperumbudur, Gretton, Fukumizu,
  Sch{\"o}lkopf, and Lanckriet]{sriperumbudur2010hilbert}
Bharath~K. Sriperumbudur, Arthur Gretton, Kenji Fukumizu, Bernhard
  Sch{\"o}lkopf, and Gert R.~G. Lanckriet.
\newblock Hilbert space embeddings and metrics on probability measures.
\newblock \emph{Journal of Machine Learning Research}, 11:\penalty0 1517--1561,
  2010.

\bibitem[Tucker(1966)]{tucker1966some}
Ledyard~R. Tucker.
\newblock Some mathematical notes on three-mode factor analysis.
\newblock \emph{Psychometrika}, 31\penalty0 (3):\penalty0 279--311, 1966.

\bibitem[Vershynin(2018)]{vershynin2018high}
Roman Vershynin.
\newblock \emph{High-dimensional probability: An introduction with applications
  in data science}, volume~47.
\newblock Cambridge university press, 2018.

\bibitem[Wang et~al.(2021)Wang, Wong, Mao, and Chan]{wang2021matrix}
Jiayi Wang, Raymond K.~W. Wong, Xiaojun Mao, and Kwun Chuen~Gary Chan.
\newblock Matrix completion with model-free weighting.
\newblock In \emph{International Conference on Machine Learning}, pages
  10927--10936. PMLR, 2021.

\bibitem[Wang et~al.(2022)Wang, Wong, and Zhang]{wang2022low}
Jiayi Wang, Raymond K.~W. Wong, and Xiaoke Zhang.
\newblock Low-rank covariance function estimation for multidimensional
  functional data.
\newblock \emph{Journal of the American statistical Association}, 117\penalty0
  (538):\penalty0 809--822, 2022.

\bibitem[Weng and Wang(2012)]{weng2012low}
Zhiyuan Weng and Xin Wang.
\newblock Low-rank matrix completion for array signal processing.
\newblock In \emph{International Conference on Acoustics, Speech and Signal
  Processing (ICASSP)}, pages 2697--2700. IEEE, 2012.

\bibitem[Williams and Seeger(2001)]{williams2001using}
Christopher K.~I. Williams and Matthias Seeger.
\newblock Using the nystr{\"o}m method to speed up kernel machines.
\newblock \emph{Advances in Neural Information Processing Systems}, 2001.

\bibitem[Xu et~al.(2023)Xu, Lin, Luo, and Xu]{xu2023hrst}
Xiuqin Xu, Mingwei Lin, Xin Luo, and Zeshui Xu.
\newblock {HRST-LR}: a hessian regularization spatio-temporal low rank
  algorithm for traffic data imputation.
\newblock \emph{IEEE Transactions on Intelligent Transportation Systems},
  24:\penalty0 11001--11017, 2023.

\bibitem[Yuchi et~al.(2023)Yuchi, Mak, and Xie]{yuchi2023bayesian}
Henry~Shaowu Yuchi, Simon Mak, and Yao Xie.
\newblock Bayesian uncertainty quantification for low-rank matrix completion.
\newblock \emph{Bayesian Analysis}, 18\penalty0 (2):\penalty0 491--518, 2023.

\bibitem[Zhang et~al.(2021)Zhang, Lu, and Jin]{zhang2021artificial}
Qian Zhang, Jie Lu, and Yaochu Jin.
\newblock Artificial intelligence in recommender systems.
\newblock \emph{Complex \& Intelligent Systems}, 7\penalty0 (1):\penalty0
  439--457, 2021.

\bibitem[Zhang and Zhang(2020)]{zhang2020low}
Shuimei Zhang and Yimin~D Zhang.
\newblock Low-rank hankel matrix completion for robust time-frequency analysis.
\newblock \emph{IEEE Transactions on Signal Processing}, 68:\penalty0
  6171--6186, 2020.

\bibitem[Zhao et~al.(2025)Zhao, Wang, and Lou]{zhao2025noisy}
Kun Zhao, Jiayi Wang, and Yifei Lou.
\newblock Noisy low-rank matrix completion via transformed $\ell_1$
  regularization and its theoretical properties.
\newblock In \emph{Proceedings of the 28th International Conference on
  Artificial Intelligence and Statistics}, Proceedings of Machine Learning
  Research. PMLR, 2025.

\bibitem[Zheng et~al.(2024)Zheng, Lou, Tian, and Wang]{zheng2024scale}
Huiwen Zheng, Yifei Lou, Guoliang Tian, and Chao Wang.
\newblock A scale-invariant relaxation in low-rank tensor recovery with an
  application to tensor completion.
\newblock \emph{SIAM Journal on Imaging Sciences}, 17\penalty0 (1):\penalty0
  756--783, 2024.

\end{thebibliography}

\newpage
\appendix

\section{Additional Information Related to Tensor}
\label{sec:add_tensor}


\begin{definition}[Matricization]
\label{def:matricize}
Let $j \in \{1,\dots,d\}$. The $j$-mode matricization of a tensor
$B \in \mathbb{R}^{q_1 \times q_2 \times \cdots \times q_d}$, denoted by $B_{(j)}$,
rearranges the entries of $B$ into a matrix of size
$q_j \times \bigl(\prod_{i \neq j} q_i\bigr)$.
Specifically, the entry of $B_{(j)}$ indexed by $(l_j, k)$ corresponds to the tensor element
$B_{l_1,\ldots,l_d}$, where the column index $k$ is determined by
\(
k = 1 + \sum_{i=1,\, i \neq j}^{d} (l_i - 1)
\Bigl(\prod_{m=1,\, m \neq j}^{i-1} q_m\Bigr).
\)
\footnote{All empty products are defined to be $1$. For example,
$\prod_{m=i}^{j} q_m = 1$ when $i > j$.}
\end{definition}


\begin{definition}[$j$-mode product]
\label{def:n-mode}
Let $j \in \{1,\dots,d\}$. For an array
$B \in \mathbb{R}^{q_1 \times q_2 \times \cdots \times q_d}$
and a matrix $P \in \mathbb{R}^{p_j \times q_j}$,
the $j$-mode product of $B$ and $P$, denoted by $B \times_j P$,
is an array of dimension
\[
q_1 \times \cdots \times q_{j-1} \times p_j \times q_{j+1} \times \cdots \times q_d.
\]
The $(l_1,\dots,l_{j-1}, k, l_{j+1}, \dots, l_d)$-th entry of $B \times_j P$
is defined as
\[
(B \times_j P)_{l_1,\dots,l_{j-1},k,l_{j+1},\dots,l_d}
=
\sum_{i=1}^{q_j}
B_{l_1,\dots,l_{j-1}, i, l_{j+1},\dots,l_d}\, P_{k,i}.
\]
\end{definition}

Figure \ref{tucker_figure} provides a pictorial illustration of a Tucker decomposition.

\begin{figure}[ht]
\centering
\scalebox{0.8}{
\begin{tikzpicture}[baseline=10ex,scale=3,
  tensor/.style={draw=blue!70!black, fill=blue!10, very thick, rounded corners=1pt},
  core/.style={draw=orange!80!black, fill=orange!15, very thick, rounded corners=1pt},
  uone/.style={draw=green!60!black, fill=green!12, very thick, rounded corners=1pt},
  utwo/.style={draw=purple!65!black, fill=purple!12, very thick, rounded corners=1pt},
  uthree/.style={draw=red!65!black, fill=red!12, very thick, rounded corners=1pt},
  hidden/.style={dashed, draw=gray!70, line width=0.8pt},
  lab/.style={font=\small}
]
\coordinate (A1) at (0, 0);
\coordinate (A2) at (0, 1);
\coordinate (A3) at (1, 1);
\coordinate (A4) at (1, 0);
\coordinate (B1) at (0.3, 0.3);
\coordinate (B2) at (0.3, 1.3);
\coordinate (B3) at (1.3, 1.3);
\coordinate (B4) at (1.3, 0.3);

\draw[tensor] (A1) -- (A2) -- (A3) -- (A4) -- cycle;
\draw[tensor] (A2) -- (B2) -- (B3) -- (A3) -- cycle;
\draw[tensor] (A4) -- (B4) -- (B3) -- (A3) -- cycle;

\draw[hidden] (A1) -- (B1);
\draw[hidden] (B1) -- (B2);
\draw[hidden] (B1) -- (B4);
\draw[hidden] (B1) -- (A1);

\draw[tensor] (B2) -- (B3);
\draw[tensor] (A2) -- (B2);
\draw[tensor] (A3) -- (B3);
\draw[tensor] (A4) -- (B4);
\draw[tensor] (B4) -- (B3);

\node[lab] at (0.6,0.6) {$B$};
\node[lab] at (0.6,-0.2) {$(q_1 \times q_2 \times q_3)$};
\end{tikzpicture}

\qquad = \qquad

\begin{tikzpicture}[baseline=9ex,scale=3,
  tensor/.style={draw=blue!70!black, fill=blue!10, very thick, rounded corners=1pt},
  core/.style={draw=orange!80!black, fill=orange!15, very thick, rounded corners=1pt},
  uone/.style={draw=green!60!black, fill=green!12, very thick, rounded corners=1pt},
  utwo/.style={draw=purple!65!black, fill=purple!12, very thick, rounded corners=1pt},
  uthree/.style={draw=red!65!black, fill=red!12, very thick, rounded corners=1pt},
  hidden/.style={dashed, draw=gray!70, line width=0.8pt},
  lab/.style={font=\small}
]

\draw[uone] (0,0) rectangle (0.5,1);
\node[lab] at (0.25,0.5) {$U_1$};
\node[lab] at (0.25,-0.2) {$(q_1 \times r_1)$};


\filldraw[core] (0.7,0.2) rectangle (1.2,0.7);

\filldraw[core] 
  (0.7,0.7) -- (0.9,0.9) -- (1.4,0.9) -- (1.2,0.7) -- cycle;

\filldraw[core]
  (1.2,0.2) -- (1.4,0.4) -- (1.4,0.9) -- (1.2,0.7) -- cycle;

\draw[hidden] (0.7,0.2) -- (0.9,0.4);
\draw[hidden] (0.9,0.4) -- (1.4,0.4);
\draw[hidden] (0.9,0.4) -- (0.9,0.9);

\node[lab] at (1,0.5) {$G$};
\node[lab] at (1.1,0) {$(r_1 \times r_2 \times r_3)$};

\draw[hidden] (0.7,0.2) -- (0.9,0.4);
\draw[core] (1.2,0.2) -- (1.4,0.4);
\draw[core] (1.4,0.4) -- (1.4,0.9);
\draw[core] (1.4,0.9) -- (1.2,0.7);
\draw[core] (1.4,0.9) -- (0.9,0.9);
\draw[core] (0.9,0.9) -- (0.7,0.7);
\draw[hidden] (0.9,0.9) -- (0.9,0.4);
\draw[hidden] (0.9,0.4) -- (1.4,0.4);

\draw[uthree] (0.7,1.1) -- (1.2,1.1) -- (1.6,1.5) -- (1.1,1.5) -- cycle;
\node[lab] at (1.2,1.3) {$U_3$};
\node[lab] at (1.9,1.3) {$(q_3 \times r_3)$};

\draw[utwo] (1.7,0.2) rectangle (2.2,0.8);
\node[lab] at (2.0,0.5) {$U_2$};
\node[lab] at (2.0,0) {$(q_2 \times r_2)$};

\end{tikzpicture}
}
\caption{Tucker decomposition of a third-order array $B \in \mathbb{R}^{q_1\times q_2\times q_3}$. 
    The values in the parentheses are dimensions for the corresponding matrices or tensors.}
\label{tucker_figure}
\end{figure}

\section{Computational Details}
\subsection{Algorithm}
\label{sec:algo}
In this section, we introduce the accelerated ADMM algorithm for solving \eqref{eqn:repar}. 
We begin with an alternative form of \eqref{eqn:repar}:
\begin{align}\label{eq:obj2}
& \min_{B \in \mathbb{R}^{m_1\times m_2 \times q} } \left\{ \ell(B) 
+ \lambda \left(\sum_{k=1}^3 \beta_j\| D_j\|_*\right) \right\}.\\
& \text{subject to}\  \ 
\calU_k B_k = D_k, \quad k =1,\dots, 3, \quad \sup_{i,j}\|B(i,j,\cdot)\| \leq \alpha,
\end{align}
where $\ell(B) = \sum_{ij} T_{ij} \|B(i,j,\cdot) - Y(i,j,\cdot)\|_2^2$, $D_1 \in \mathbb{R}^{m_1\times (m_2q)}$, $D_2 \in \mathbb{R}^{m_2 \times (m_1q)}$ and $D_3 \in \mathbb{R}^{q \times (m1m2)}$.

Then a standard ADMM algorithm solves the optimization problem \eqref{eq:obj2} by
minimizing the augmented Lagrangian with respect to different variables alternatively.
More explicitly,
at the $(t+1)$-th iteration,
the following updates are implemented:
\begin{subequations}
	\label{eq:upd}
	\begin{align}
	{B} ^{(t+1)}&=\argmin_{\sup_{i,j}\|B(i,j,\cdot)\| \leq \alpha} \left\{{\ell}(B) + \frac{\eta}{2}\sum_{k=1}^{3}\left\|\calU_k B - D_k^{(t)} + V_k^{(t)} \right\|_F^2  \right\}, \label{eq:upd1}\\
	 D_k^{(t+1)}&=\argmin_{ D_k}\left\{\lambda \beta_k\|  D_{k}\|_*+\frac{\eta}{2} \left\| \calU_k B^{(t+1)} - D_{k} +  V_{k}^{(t)} \right\|_F^2\right\} ,\ k=1,2,3,\label{eq:upd3}\\
	 V_k^{(t+1)}&= V_k^{(t)}+ \calU_k B^{(t+1)}- D_k^{(t+1)},\ k=1,2,3,
	\end{align}
\end{subequations}
where $ V_k$ is of the same dimension of $D_k$, for $k=1,2,3$, are scaled Lagrangian multipliers
and $\eta>0$ is an algorithmic parameter.
An adaptive strategy to tune $\eta$ is provided in
\citet{Boyd-Parikh-Chu10}.
Steps \eqref{eq:upd1},  \eqref{eq:upd3}
involve additional optimizations.
and now we discuss how to solve them.

The objective function of \eqref{eq:upd1} is a quadratic function, and so we can easily solve this
with a closed-form solution, and then perform a projection to let it satisfy the sup constraint,  given in Algorithm \ref{admm}.
To solve \eqref{eq:upd3}, we use proximal operator $\mathrm{prox}_v$defined by
\begin{subequations}
	\begin{align}
	\mathrm{prox}_v (A) &= \argmin_{W} \left\{  \frac{1}{2}  \|W -  A\|_F^2 + v \| W \|_*\right\} 
	\end{align}
	\label{eq:prox}
\end{subequations}
for $v\ge 0$, where $A \in \mathbb{R}^{n_1\times n_2}$ denotes a general matrix. 
By Lemma 1 in \cite{mazumder2010spectral}, the solutions to  \eqref{eq:prox} have closed forms.
More specifically, 
write the singular value decomposition of
$A$
as
$U\mathrm{diag}((\tilde{a}_1,\dots,\tilde{a}_{\min\{n_1,n_2\}}))  V^\tp $, then
$\mathrm{prox}_v ( A) =  U \mathrm{diag}( \tilde{ c})  V^\tp$
where  $\tilde{ c} = (( \tilde{a}_1-v)_+, (\tilde{a}_2-v)_+,\dots, (\tilde{ a}_{\min\{n_1,n_2\}}-v)_+)$.

The details of the computational procedure are summarized in Algorithm~\ref{admm}, which presents an accelerated variant of the alternating direction method of multipliers (ADMM) incorporating additional steps to improve convergence speed. We denote by $\mathcal{U}_k^{-1}$ the inverse of $\mathcal{U}_k$. Let $\tilde{T} \in \mathbb{R}^{m_1 \times m_2 \times q}$ denote the indicator tensor, where $\tilde{T}(i,j,k) = 1$ if $T_{ij} = 1$ and $\tilde{T}(i,j,k) = 0$ otherwise. We further define $\tilde{I} \in \mathbb{R}^{m_1 \times m_2 \times q}$ as the identity tensor with all entries equal to one.

\begin{algorithm}[h]
  \caption{Accelerated ADMM for solving \eqref{eqn:repar}}
  \label{admm}
  \begin{algorithmic}
    \STATE {\bfseries Input:} Tensor $Y$, parameters $\lambda>0$, $\beta_1$, $\beta_2$, $\beta_3 \ge 0$, $\eta>0$, $T$, $\epsilon$.
    \STATE {\bfseries Initialize:} ${ V}_{1}^{(0)}, {D}_{1}^{(0)}\in \mathbb{R}^{m_1\times(m_1q)}$, ${ V}_{2}^{(0)}, {D}_{2}^{(0)}\in \mathbb{R}^{m_2\times(m_2q)}$, and ${ V}_{3}^{(0)}, {D}_{3}^{(0)}\in \mathbb{R}^{q\times(m_1m_2)}$. $ B^{(0)}\in \mathbb{R}^{m_1\times m_2 \times q}$.
    \FOR{$t=1$ {\bfseries to} $T$}
   \STATE $$\tilde{B}^{(t)} = \left[ 2Y + \sum_{k=1}^3 \{\eta (\calU_k^{-1}) D_k^{(t-1)} - \eta (\calU_k^{-1}) V_k^{(t-1)}\} \right]/ [ \tilde{T} + 3 * \eta * \tilde{I}],$$
    where the operator $/$ denotes element-wise division between tensors. Then $B^{(t)}$ is constructed as 
   \[
B^{(t)}(i,j,\cdot) =
\begin{cases}
\alpha * \tilde{B}(i,j,\cdot)/\|\tilde{B}(i,j,\cdot)\| , & \text{if } \|\tilde{B}(i,j,\cdot)\| > \alpha, \\
\tilde{B}^{(t)}(i,j,\cdot), & \text{otherwise}.
\end{cases}
\]
$$D_k^{(t)} = \mathrm{prox}_{\lambda \beta_k/\eta}(\calU_k B^{(t)} - V_k^{(t-1)}), \qquad k = 1,2, 3. $$
$$V_k^{(t)}= V_k^{(t-1)}+ \calU_k B^{(t)}- D_k^{(t)},\qquad k=1,2,3.$$
    \IF {$\|B^{(t)} - B^{(t-1)}\|_F^2 / \|B^{(t-1)}\|_F \leq \epsilon$} 
   \STATE {\bfseries Break:} set  $B^{(T)} = B^{(t)}$
    \ENDIF
    \ENDFOR
  \end{algorithmic}
\end{algorithm}

\subsection{KME to Density}
\label{sec:num_inversion}
In this section, we provide a way to numerically transform a KME function $\mu$ to the corresponding density function $\rho$.  
Set a grid over the range of $\calX$ as $X_1,\dots, X_K$. Note that 
\begin{align*}
    \mu(t) = \EE_X K(X, t) = \int_{x \in \calX} K(x, t) \rho(x) \mathrm{d}x \approx
   \frac{1}{K}\sum_{t=1}^K K(X_t, t) \rho(X_t). 
\end{align*}
Then we could solve $\rho(X_t), t = 1,\dots, K$ via the following optimization
\begin{align}
    \min_{\rho(X_t), t = 1,\dots, K} \sum_{t'=1}^T \left(\mu(X_t') -  \frac{1}{K}\sum_{t=1}^K K(X_{t'}, t) \rho(X_{t})\right)^2, \\
    \mathrm{subject\  to} \rho(X_t), t = 1,\dots, K \ge 0.
\end{align}

\section{Proof}
\label{sec:proof}
We introduce some notations for simplicity. We take $\calU_j \mu$ as $\mu_{(j)}$ and $\lambda_j = \lambda \beta_j$, for $j =1,2,3$.

\subsection{Proof of Theorem \ref{thm:representor}}
For any $\mu \in \mathbb{R}^{m_1} \otimes \mathbb{R}^{m_2} \otimes \calH $ , we can decompose it into two orthogonal parts $\mu_1$ and $\mu_2$ such that $\mu_1\in \calK$ and $\mu_2 \in \left( \calK\right)^\bot $.  Note that for any $i,j$, by the reproducing property of kernel $K$ and the definition of $\mu_2$, the loss 
$$T_{ij} \left\|\mu(i,j,\cdot) - \frac{1}{N_{ij}} \sum_{l=1}^{N_{ij}}K(X^{(ij)}_{l}, \cdot)\right\|_\calH^2 =  T_{ij} \left\|\mu_1(i,j,\cdot) - \frac{1}{N_{ij}} \sum_{l=1}^{N_{ij}} K(X^{(ij)}_{l}, \cdot)\right\|_\calH^2.$$
Then it suffices to show that $\|\calU_k \mu\|_* \ge\|\calU_k \mu_1\|_* $ for $k=1,2,3$.  
Let $P_{\calK}$ be the projection operator to space $\calK$ and denote the adjoint operator of $A$ by $A^*$. Denote $\sigma_k(A)$ as $k$-th singular value of the operator $A$. We have
\begin{align*}
\sigma_k(\calU_j \mu_{1}) = \sigma_k(\calU_j \mu P_{\calK}) \leq \sigma_k(\calU_j \mu), \qquad j = 1,2\\
\sigma_k(\calU_3 \mu_{1}) =  \sigma_k(P_{\calK} \calU_3 \mu ) \leq \sigma_k(\calU_3 \mu )
\end{align*}
Therefore,
$\|\calU_k \mu\|_* \ge\|\calU_k \mu_1\|_* $ for $k=1,2,3$.

\subsection{Proof of Theorem \ref{thm:approximately} and \ref{thm:low}}

\begin{proof}[Proof of Theorem \ref{thm:approximately}]
   
From the basic inequality $Q(\hat{\mu}) \leq Q(\mu^*)$, we have
\begin{multline*}
\label{eqn:basic_inequality}
     \frac{1}{m_1 m_2} \sum_{(i,j)} T(i,j) \left\| \hat \mu(i,j,\cdot) - \mu^*(i,j,\cdot)\right\|^2_\calH   +  \lambda_1 \|\hat{\mu}_{(1)}\|_* + \lambda_2 \|\hat{\mu}_{(2)}\|_* +\lambda_3 \|\hat{\mu}_{(3)}\|_* \leq    \\
     +  \lambda_1 \|{\mu}^*_{(1)}\|_* + \lambda_2 \|{\mu}^*_{(2)}\|_* +\lambda_3 \|{\mu}^*_{(3)}\|_* \\
     + \frac{2}{m_1 m_2}  \left|\sum_{(i,j)} T(i,j)\left \langle \hat \mu(i,j,\cdot) - \mu^*(i,j,\cdot), \frac{1}{N_{ij}}\sum_{l=1}^{N_{ij}} K(X^{(ij)}_l,\cdot) - \mu^*(i,j,\cdot) \right \rangle_\calH \right|.
\end{multline*}

Denote $\Delta = \hat{\mu} - \mu^*$. By Lemma \ref{lem:frobenius}, we have
    \begin{align*}
      &\pi_L \frac{1}{m_1m_2} \sum_{i,j} \|\Delta(i,j,\cdot)\|_\calH^2 \lesssim 
        \frac{1}{m_1m_2} \sum_{i,j} T_{i,j} \|\Delta(i,j,\cdot)\|_\calH^2 + (2\alpha)^2 \sqrt{\frac{\pi_U \log d}{\pi^2_Lm_1m_2}} \\
        &+ (2\alpha)\sqrt{\frac{\pi_U\log d}{ m}}\frac{1}{\sqrt{m_1m_2}} \left( \frac{\lambda_1}{\lambda_1 + \lambda_2}\left[\|\hat{\mu}_{(1)}\| + \|{\mu}^*_{(1)}\|\right] + \frac{\lambda_2}{\lambda_1 + \lambda_2}\left[\|\hat{\mu}_{(2)}\| + \|{\mu}^*_{(2)}\|\right]  \right) \\
        &\lesssim \left| \sum_{i,j} \langle  \Delta(i,j,\cdot), T(i,j) \epsilon(i,j,\cdot) \rangle_\calH \right|  +  \lambda_1  \|{\mu}^*_{(1)}\|_*+ \lambda_2 \|{\mu}^*_{(2)}\|_* + \lambda_3 \|{\mu}^*_{(3)}\|_* -\lambda_2\|\hat{\mu}_{(2)}\|_*  - \lambda_1  \|\hat{\mu}_{(1)}\|_* - \lambda_3\|\hat{\mu}_{(3)}\|_* \\
        &+ (2\alpha)^2 \sqrt{\frac{\pi_U \log d}{m_1m_2}} + (2\alpha)\sqrt{\frac{\pi_U\log d}{m}} \frac{1}{\sqrt{m_1m_2}} \left( \frac{\lambda_1}{\lambda_1 + \lambda_2}\left[\|\hat{\mu}_{(1)}\| + \|{\mu}^*_{(1)}\|\right] + \frac{\lambda_2}{\lambda_1 + \lambda_2}\left[\|\hat{\mu}_{(2)}\| + \|{\mu}^*_{(2)}\|\right]  \right). 
    \end{align*}
Apply the bound in Lemma \ref{lem:noise}, the above inequality can be further bounded by
\begin{align*}
    &\lesssim  \Delta_N \sqrt{\frac{\pi_U\log m_1m_2}{m}} \frac{1}{\sqrt{m_1m_2}} \left( \frac{\lambda_1}{\lambda_1 + \lambda_2}\left[\|\hat{\mu}_{(1)}\| + \|{\mu}^*_{(1)}\|\right] + \frac{\lambda_2}{\lambda_1 + \lambda_2}\left[\|\hat{\mu}_{(2)}\| + \|{\mu}^*_{(2)}\|\right]  \right) \\
    & +  \lambda_1  \|{\mu}^*_{(1)}\|_*+ \lambda_2 \|{\mu}^*_{(2)}\|_* + \lambda_3 \|{\mu}^*_{(3)}\|_* -\lambda_2\|\hat{\mu}_{(2)}\|_*  - \lambda_1  \|\hat{\mu}_{(1)}\|_* - \lambda_3\|\hat{\mu}_{(3)}\|_* \\
        &+ (2\alpha)^2 \sqrt{\frac{\pi_U \log d}{m_1m_2}} + (2\alpha)\sqrt{\frac{\pi_U\log d}{m}} \frac{1}{\sqrt{m_1m_2}} \left( \frac{\lambda_1}{\lambda_1 + \lambda_2}\left[\|\hat{\mu}_{(1)}\| + \|{\mu}^*_{(1)}\|\right] + \frac{\lambda_2}{\lambda_1 + \lambda_2}\left[\|\hat{\mu}_{(2)}\| + \|{\mu}^*_{(2)}\|\right]  \right). 
\end{align*}
By taking 
\begin{align*}
    (\lambda_1 + \lambda_2)^{-1}=  \bigO \left( \left[\frac{1}{m_1 m_2}\max\left\{ \left(
  \sqrt{M\pi_U} \Delta_N \log(m_1m_2)
 \right),  (\alpha)\sqrt{{\pi_U M \log d }} \right\} \right]^{-1}\right), 
\end{align*}
we have
\begin{align*}
   \frac{1}{m_1m_2} \sum_{i,j} \|\Delta(i,j,\cdot)\|_\calH^2 \lesssim  (2\alpha)^2 \sqrt{\frac{\pi_U \log d}{\pi_L^2m_1m_2}}  + \frac{1}{\pi_L}\left(   \lambda_1  \|{\mu}^*_{(1)}\|_*+ \lambda_2 \|{\mu}^*_{(2)}\|_* + \lambda_3 \|{\mu}^*_{(3)}\|_* \right). 
\end{align*}


\end{proof}

\begin{proof}[Proof of Theorem \ref{thm:low}]
For a matrix $B \in \mathbb{R}^{k_1\times k_2}$, take $U_B$ and $V_B$ as the left and right singular matrixces of $B$ accordingly. Denote $S_U(B)$ and $S_V(B)$ as the linear spaces spanned by column vectors of $U_B$ and $V_B$ respectively. Denote $S^{\ind}_U(B)$ and $S^{\ind}_V(B)$  as the orthogonal components of $S_U(B)$ and $S_V(B)$ respectively. Define
\begin{align}
        P^{\ind}_{B} B' = \mathbf{{P}_{S^{\ind}_U(B)}} B'   \mathbf {{P}_{S^{\ind}_V(B)}}, \qquad P_{B} B' = B' - P^{\ind}_{B} B',
\end{align}
where $\mathbf{{P}_{S}}$ is the projection matrix on the linear space $S$. And we have the following inequality
    \begin{align*}
        \left\| \hat{\mu}_{(1)}\right\|_* = \left\| P_{\mu^*_{(1)}}(\hat{\mu}_{(1)} - \mu^*_{(1)}) +P^{\ind}_{\mu^*_{(1)}}(\hat{\mu}_{(1)} - \mu^*_{(1)}) + \mu^*_{(1)} \right\|_* \\
        \ge  \left\| P^{\ind}_{\mu^*_{(1)}}(\hat{\mu}_{(1)} - \mu^*_{(1)}) + \mu^*_{(1)} \right\|_*  -  \left\| P_{\mu^*_{(1)}}(\hat{\mu}_{(1)} - \mu^*_{(1)})\right\|_*\\
        =  \left\| P^{\ind}_{\mu^*_{(1)}}(\hat{\mu}_{(1)} - \mu^*_{(1)})  \right\|_*  +  \left\| \mu^*_{(1)} \right\|_*  - \left\| P_{\mu^*_{(1)}}(\hat{\mu}_{(1)} - \mu^*_{(1)})\right\|_*.
    \end{align*}
And 
\begin{align}
\label{eqn:decomp}
    \left\| \mu^*_{(1)} \right\|_*  - \left\| \hat{\mu}_{(1)}\right\|_* \leq \left\| P_{\mu^*_{(1)}}(\hat{\mu}_{(1)} - \mu^*_{(1)})\right\|_* - \left\| P^{\ind}_{\mu^*_{(1)}}(\hat{\mu}_{(1)} - \mu^*_{(1)})  \right\|_*.
\end{align}
    Similar inequalities can be obtained for the second and the third mode unfolding. 

    Next, by Lemma \ref{lem:frobenius}, we have
\begin{align*}
     &  \pi_L \frac{1}{m_1m_2} \sum_{i,j} \|\Delta(i,j,\cdot)\|_\calH^2 \leq   \EE  \frac{1}{m_1m_2} \sum_{i,j} T_{i,j} \|\Delta(i,j,\cdot)\|_\calH^2 \lesssim
        \frac{1}{m_1m_2} \sum_{i,j} T_{i,j} \|\Delta(i,j,\cdot)\|_\calH^2 + (2\alpha)^2 \sqrt{\frac{\pi_U \log d}{\pi^2_Lm_1m_2}} \\
       & + \alpha\sqrt{\frac{\pi_U\log d}{ m}}\frac{1}{\sqrt{m_1m_2}} \left( \frac{\lambda_1}{\lambda_1 + \lambda_2}\left[\left\| P^{\ind}_{\mu^*_{(1)}}(\hat{\mu}_{(1)} - \mu^*_{(1)})  \right\|_* + \left\| P_{\mu^*_{(1)}}(\hat{\mu}_{(1)} - \mu^*_{(1)})\right\|_*\right] \right.\\
     &   \left. + \frac{\lambda_2}{\lambda_1 + \lambda_2}\left[\left\| P^{\ind}_{\mu^*_{(2)}}(\hat{\mu}_{(2)} - \mu^*_{(2)})  \right\|_* + \left\| P_{\mu^*_{(2)}}(\hat{\mu}_{(2)} - \mu^*_{(2)})\right\|_*\right]  \right) \\
     & \text{By the basic inquality \eqref{eqn:basic_inequality}:} \\
      &  \leq \frac{\left| \sum_{i,j} \langle  \Delta(i,j,\cdot), T(i,j) \epsilon(i,j,\cdot) \rangle_\calH \right|}{m_1m_2}  +  \lambda_1  \|{\mu}^*_{(1)}\|_*+ \lambda_2 \|{\mu}^*_{(2)}\|_* + \lambda_3 \|{\mu}^*_{(3)}\|_* -\lambda_2\|\hat{\mu}_{(2)}\|_*  - \lambda_1  \|\hat{\mu}_{(1)}\|_* - \lambda_3\|\hat{\mu}_{(3)}\|_* \\
     &   + (2\alpha)^2 \sqrt{\frac{\pi_U \log d}{m_1m_2}} + (2\alpha)\sqrt{\frac{\pi_U\log d}{ m}}\frac{1}{\sqrt{m_1m_2}} \left( \frac{\lambda_1}{\lambda_1 + \lambda_2}\left[\left\| P^{\ind}_{\mu^*_{(1)}}(\hat{\mu}_{(1)} - \mu^*_{(1)})  \right\|_* + \left\| P_{\mu^*_{(1)}}(\hat{\mu}_{(1)} - \mu^*_{(1)})\right\|_*\right] \right.\\
        &\left. + \frac{\lambda_2}{\lambda_1 + \lambda_2}\left[\left\| P^{\ind}_{\mu^*_{(1)}}(\hat{\mu}_{(2)} - \mu^*_{(2)})  \right\|_* + \left\| P_{\mu^*_{(2)}}(\hat{\mu}_{(2)} - \mu^*_{(2)})\right\|_*\right]  \right) \\
         & \text{By Lemma \ref{lem:noise}:} \\
 & \lesssim \frac{2}{m_1 m_2}  \left(
  \sqrt{M\pi_U} \Delta_N \log(m_1m_2)
 \right) \left(\frac{\lambda_1}{\lambda_1 + \lambda_2}\|\Delta_{(1)}\|_*  + \frac{\lambda_2}{\lambda_1 + \lambda_2 }\|\Delta_{(2)}\|_* \right)\\
       & +  \lambda_1 \left( P_{\mu^*_{(1)}}\|\Delta_{(1)}\|_* -  P^{\ind}_{\mu^*_{(1)}}\|\Delta_{(1)}\|_*\right)+ \lambda_2 \left( P_{\mu^*_{(2)}}\|\Delta_{(2)}\|_* -  P^{\ind}_{\mu^*_{(2)}}\|\Delta_{(2)}\|_*\right) + \lambda_3 \left( P_{\mu^*_{(3)}}\|\Delta_{3)}\|_* -  P^{\ind}_{\mu^*_{(3)}}\|\Delta_{(3)}\|_*\right)\\
       & + (2\alpha)^2 \sqrt{\frac{\pi_U \log d}{m_1m_2}} + (2\alpha)\sqrt{\frac{\pi_U\log d}{ m)}}\frac{1}{\sqrt{m_1m_2}} \left( \frac{\lambda_1}{\lambda_1 + \lambda_2}\left[\left\| P^{\ind}_{\mu^*_{(1)}}(\hat{\mu}_{(1)} - \mu^*_{(1)})  \right\|_* + \left\| P_{\mu^*_{(1)}}(\hat{\mu}_{(1)} - \mu^*_{(1)})\right\|_*\right] \right.\\
       & \left. + \frac{\lambda_2}{\lambda_1 + \lambda_2}\left[\left\| P^{\ind}_{\mu^*_{(1)}}(\hat{\mu}_{(2)} - \mu^*_{(2)})  \right\|_* + \left\| P_{\mu^*_{(2)}}(\hat{\mu}_{(2)} - \mu^*_{(2)})\right\|_*\right]  \right) \\
    \end{align*}
If
\begin{align*}
   (\lambda_1 + \lambda_2)^{-1}=  \bigO \left( \left[\max\left\{ \frac{2}{m_1 m_2}\left(
  \sqrt{M\pi_U} \Delta_N \log(m_1m_2)
 \right),  (2\alpha)\sqrt{{\pi_U M \log d M}} \right\} \right]^{-1}\right), 
\end{align*}
Then we have
\begin{align*}
     & \pi_L \frac{1}{m_1m_2} \|\Delta\|^2_F\lesssim   \min\left\{\lambda_1 \left( P_{\mu^*_{(1)}}\|\Delta_{(1)}\|_* \right)+ \lambda_2 \left( P_{\mu^*_{(2)}}\|\Delta_{(2)}\|_* \right) , \lambda_3 \left( P_{\mu^*_{(3)}}\|\Delta_{3)}\|_* \right) \right\} + (2\alpha)^2 \sqrt{\frac{\pi_U \log d}{m_1m_2}}  \\
     & \lesssim  \min\{\lambda_1 \sqrt{r_1} \|\Delta\|_F + \lambda_2 \sqrt{r_2} \|\Delta\|_F + \lambda_3 \sqrt{r_3} \|\Delta\|_F \} +  (2\alpha)^2 \sqrt{\frac{\pi_U \log d}{m_1m_2}} \\
      & \frac{1}{m_1m_2} \|\Delta\|^2_F \lesssim  \min\left\{\frac{\lambda_1^2m_1m_2r_1}{\pi_L^2} + \frac{\lambda_2^2m_1m_2r_2}{\pi_L^2} , \frac{\lambda_3^2m_1m_2r_3}{\pi_L^2} \right\} + (2\alpha)^2 \sqrt{\frac{\pi_U \log d}{\pi_L^2m_1m_2}} 
\end{align*}
\end{proof}

\subsection{Auxiliary Lemmas}
\begin{lemma}
\label{lem:noise}
Take $N = \min_{(i,j):T(i,j)=1}N_{ij}$ and $\epsilon(i,j, \cdot) = \frac{1}{N_{ij}}\sum_{l=1}^{N_{ij}} K(X^{(ij)}_l,\cdot) - \mu^*(i,j,\cdot)$. We have, for any $A \in \calH^{m_1\times m_2}$, with probability at least $1-2/d - 2\exp(-m_1m_2\pi_L)$,
\begin{align*}
& \left| \sum_{i,j} \langle  A(i,j,\cdot), T(i,j) \epsilon(i,j,\cdot) \rangle_\calH \right|  \\
 \lesssim & \min\left\{ \sqrt{M\pi_U} \Delta_N \log(m_1) \left\|A_{(1)}\right\|_*, \sqrt{M\pi_U} \Delta_N \log(m_2) \left\|A_{(2)}\right\|_*, \sqrt{m_1m_2\pi_U} \Delta_N \left\|A_{(3)}\right\|_*\right\}
\end{align*}
where $\Delta_N = \sqrt{\frac{\log m_1m_2 + \log d}{N}}$.
\end{lemma}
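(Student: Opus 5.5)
The plan is to prove each of the three bounds separately by duality, using the same two-stage scheme. Write $W := \sum_{i,j} T(i,j)\,\epsilon(i,j,\cdot)$, viewed as an element $W \in \mathbb{R}^{m_1}\otimes\mathbb{R}^{m_2}\otimes\calH$ whose $(i,j)$ fiber is $T(i,j)\epsilon(i,j,\cdot)$. Then
\[
\sum_{i,j}\langle A(i,j,\cdot),T(i,j)\epsilon(i,j,\cdot)\rangle_\calH=\langle A,W\rangle=\langle \calU_k A,\calU_k W\rangle,
\]
since each $\calU_k$ is an isometric rearrangement. By trace duality (nuclear norm versus operator norm),
\[
\left|\langle A,W\rangle\right| \le \|\calU_k A\|_*\,\|\calU_k W\|_{\mathrm{op}}, \qquad k=1,2,3.
\]
So it suffices to bound the three operator norms $\|\calU_k W\|_{\mathrm{op}}$ with high probability, uniformly in $A$. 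Uniformity is automatic, because $A$ no longer appears after duality.

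\textbf{Step 1: entrywise control of the noise.} Conditionally on $T$, the $\epsilon(i,j,\cdot)$ are independent, mean-zero, and $\calH$-valued, and each is an average of $N_{ij}$ i.i.d. bounded elements: $\|K(X,\cdot)\|_\calH^2=K(X,X)\le C_K$ by Assumption~\ref{ass:kernel}. A Hilbert-space Hoeffding or Pinelis inequality gives $\|\epsilon(i,j,\cdot)\|_\calH\lesssim \sqrt{C_K/N}+\sqrt{t/N}$ with probability $1-e^{-t}$. Taking $t\asymp\log(m_1m_2)+\log d$ and a union bound over all entries yields $\max_{i,j}\|\epsilon(i,j,\cdot)\|_\calH\lesssim\Delta_N$ with probability at least $1-1/d$.

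\textbf{Step 2: bounding each unfolding.} Write $\epsilon_{ij}$ for $\epsilon(i,j,\cdot)$.

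For $k=3$, $\calU_3 W$ is the operator $\mathbb{R}^{m_1m_2}\to\calH$ with columns $T_{ij}\epsilon_{ij}$. Its operator norm is at most its Hilbert--Schmidt norm, so
\[
\|\calU_3W\|_{\mathrm{op}}\le\Big(\sum_{i,j}T_{ij}\|\epsilon_{ij}\|^2_\calH\Big)^{1/2}\le\Delta_N\Big(\sum_{i,j} T_{ij}\Big)^{1/2}.
\]
A Chernoff bound gives $\sum_{i,j} T_{ij}\le 2m_1m_2\pi_U$ except on an event of probability $\exp(-c\,m_1m_2\pi_L)$, where I use Assumption~\ref{ass:missing} to compare $\pi_U$ and $\pi_L$. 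This produces the $\sqrt{m_1m_2\pi_U}\,\Delta_N$ factor.

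For $k=1$, $\calU_1W=\sum_{i,j} e_i\otimes(e_j\otimes T_{ij}\epsilon_{ij})$ is a sum of independent, mean-zero random operators $Z_{ij}:\mathbb{R}^{m_2}\otimes\calH\to\mathbb{R}^{m_1}$. Each has rank one and norm at most $\Delta_N$ on the good event of Step 1; to keep independence I truncate at that level. I will apply the matrix Bernstein inequality in its dilation form. It extends to operators with one infinite-dimensional side because the range $\mathbb{R}^{m_1}$ is finite dimensional, and one can also restrict to the finite-dimensional span of the $\epsilon_{ij}$, which is legitimate by the argument of Theorem~\ref{thm:representor}.

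The variance terms are computed as follows.
\begin{itemize}
\item $\big\|\sum\EE Z_{ij}Z_{ij}^*\big\| = \max_i\sum_j\pi_{ij}\EE\|\epsilon_{ij}\|^2\le m_2\pi_U\Delta_N^2$.
\item $\sum\EE Z_{ij}^*Z_{ij}$ is block diagonal over $j$, with blocks $\sum_i\pi_{ij}\EE\,\epsilon_{ij}\otimes\epsilon_{ij}$. Its norm is at most $m_1\pi_U\Delta_N^2$.
\end{itemize}
Hence $\sigma^2\le M\pi_U\Delta_N^2$. Matrix Bernstein, with an effective dimension of order $d$, then gives
\[
\|\calU_1W\|_{\mathrm{op}}\lesssim \sqrt{M\pi_U}\,\Delta_N\sqrt{\log d}+\Delta_N\log d
\]
with probability $1-1/d$. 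The second term is absorbed into the first using $\pi_L^{-1}=o(m/\log d)$, which gives $M\pi_U\gtrsim\log d$; the resulting factor is within the stated $\log$ factor. The case $k=2$ is symmetric.

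\textbf{Main obstacle.} The delicate step is the operator Bernstein for $\calU_1W$ and $\calU_2W$. The right factor space $\mathbb{R}^{m_2}\otimes\calH$ is infinite dimensional, so the dimensional factor in Bernstein must come only from the finite side. I would handle this either with the intrinsic-dimension version of Bernstein or, more simply, by conditioning on the samples and working in the finite-dimensional span of the $\{\epsilon_{ij}\}$. I also need the truncation at $\Delta_N$ to interact cleanly with the mean-zero requirement. Since the truncated variables are bounded, the mean shift caused by truncation is negligible; alternatively, I can condition on $X$ and use only the Bernoulli randomness of the $T_{ij}$ after centering $T_{ij}-\pi_{ij}$. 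Combining the three bounds with trace duality, and union-bounding the failure events, gives the claimed minimum.
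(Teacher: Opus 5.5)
Your argument is correct, but it reaches the bounds for modes 1 and 2 by a different route from the paper. The duality step and the mode-3 bound are the same in both: each uses $\|\calU_3W\|\le(\sum_{i,j}T_{ij}\|\epsilon_{ij}\|_\calH^2)^{1/2}$. Your Chernoff bound on $\sum_{i,j}T_{ij}$ is what supplies the $\exp(-m_1m_2\pi_L)$-type term in the statement, which the paper never derives.

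For modes 1 and 2, the paper squares the norm and works with the $m_1\times m_1$ Gram matrix $M(i,i')=\sum_j T_{ij}T_{i'j}\langle\epsilon_{ij},\epsilon_{i'j}\rangle_\calH$. It bounds the diagonal by scalar concentration of the row sums $\sum_j T_{ij}\|\epsilon_{ij}\|_\calH^2$, and the off-diagonal part by Theorem 4.4.5 of Vershynin.
\begin{itemize}
\item \textbf{What the paper's route buys.} It never sees the infinite dimension of $\calH$.
\item \textbf{Where it is not justified as written.} Theorem 4.4.5 requires independent entries. Here $M'(i,i')$ and $M'(i,i'')$ both depend on $\{(T_{ij},\epsilon_{ij})\}_j$, so a decoupling step is missing.
\item \textbf{What your route buys.} Operator Bernstein on $\calU_1W=\sum_{i,j}Z_{ij}$ works with genuinely independent summands. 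It also gives $\sqrt{\log d}$ where the paper gets $\log d$.
\item \textbf{The log factor.} Neither argument gives the literal $\log m_1$ of the statement when $m_1\ll m_2$. The theorems only use the $\log d$ form.
\item \textbf{Your price.} You need a dimension-free Bernstein inequality (Tropp's intrinsic-dimension version or Minsker's operator version) and a truncation step.
\end{itemize}

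Several side remarks need repair.
\begin{itemize}
\item \textbf{Restricting to the span of the $\epsilon_{ij}$ is not legitimate.} That span is random and generated by the summands themselves; Theorem~\ref{thm:representor} concerns the minimizer, not the noise. Rely on the intrinsic-dimension version instead. Since $\mathrm{tr}\,V_1=\mathrm{tr}\,V_2$ and $\|V_1\|\ge \mathrm{tr}\,V_1/m_1$, the intrinsic dimension is at most $2m_1$.
\item \textbf{Conditioning on $X$ and using only $T_{ij}-\pi_{ij}$ does not work.} It leaves $\sum_{i,j}\pi_{ij}\,e_i\otimes(e_j\otimes\epsilon_{ij})$ uncontrolled.
\item \textbf{The useful conditioning is on $T$.} Given $T$, the summands over observed entries are independent and mean zero. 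You may then truncate at $c\Delta_N$; unconditionally this level depends on all of $T$ and would destroy independence. The variance is at most $C_K/N$ times the largest row or column count of $T$, which is $\lesssim M\pi_U$ by Chernoff together with $m\pi_L\gg\log d$.
\item \textbf{This also fixes which $N$ appears.} Conditioning on $T$ makes $N$ the minimum over observed entries, as in the statement. Your unconditional variance $\max_i\sum_j\pi_{ij}\EE\|\epsilon_{ij}\|_\calH^2$ also involves $N_{ij}$ at unobserved entries.
\item \textbf{Why the truncation bias is negligible.} It is because $\|\epsilon_{ij}\|_\calH\le 2\sqrt{C_K}$ almost surely and each truncation event has probability at most $1/(d\,m_1m_2)$. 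It is not because the truncated variables are bounded.
\end{itemize}
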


\begin{proof}[Proof of Lemma \ref{lem:noise}]
First, conditioned on the event $\sum_{i,j}T(i,j) = n$. 
   From Theorem 3.4 in \cite{muandet2017kernel}, we have with probability at least $1-\delta$, 
   \begin{align*}
\|\epsilon(i,j,\cdot)\|_{\mathcal {H}}
   \leq {\frac {2}{N_{ij}}}C_K+{\sqrt {\frac {\log(2/\delta )}{2N_{ij}}}}, 
\end{align*}
 where $C_K = \sup_{x \in \mathcal{X}} |K(x,x)|$.   By applying a union bound, we have, with probability at least $1- 1/d$,
 \begin{align*}
     \sup_{(i,j):T(i,j)=1} \|\epsilon(i,j,\cdot)\|_{\mathcal {H}} \lesssim \sqrt{\frac{\log n + \log d}{N}} \lesssim \sqrt{\frac{\log m_1m_2+ \log d}{N}}.
 \end{align*}
By marginalizing $n$, we have the above bound hold without conditioning. 
Next, define the event
\begin{align}
    \mathcal{E}: \left\{ \sup_{(i,j):T(i,j)=1} \|\epsilon(i,j,\cdot)\|_{\mathcal {H}} \lesssim \sqrt{\frac{\log n + \log d}{N}} \lesssim \sqrt{\frac{\log m_1m_2+ \log d}{N}}\right\}. 
\end{align}

 Note that
\begin{align*}
    \left| \sum_{i,j} \langle  A(i,j,\cdot), T(i,j) \epsilon(i,j,\cdot) \rangle_\calH \right|   = \left\langle A_{(1)}, T\circ_1 \epsilon_{(1)} \right\rangle \leq \|A_{(1)}\|_* \|T \circ_1 \epsilon_{(1)}\|,
\end{align*}

It remains to bound $\|T \circ_1 \epsilon_{(1)}\|$, where $T$ and $\epsilon$ are independent.

Take $\Delta_N = \sup_{i,j} \|\epsilon(i,j,\cdot)\|_\calH$. 
\begin{align*}
    \|T \circ_1 \epsilon_{(1)}\|^2 & \leq \sup_{\|a\|_2 \leq 1} \sum_{j=1}^{m_2} \left\| \sum_{i=1}^{m_1} a_i T_{ij} \epsilon(i,j,\cdot) \right\|_{\calH}^2 \\
     & \leq \sup_{\|a\|_2 \leq 1} \sum_{j=1}^{m_2} \left\{ \sum_{i=1}^{m_1} a^2_i T^2_{ij} \|\epsilon(i,j,\cdot)\|_\calH^2 + \sum_{i\neq i'} a_i a_{i'} T_{ij} T_{i'j} \langle \epsilon(i,j,\cdot), \epsilon(i',j,\cdot) \rangle_\calH \right\}\\
    & \leq \|M\| \\
    & \leq \|D\| + \|M'\|,
\end{align*}
where $M, D, M' \in \mathbb{R}^{m_1\times m_1}$ are matrices such that $M(i,i') = \sum_{j=1}^{m_2} T_{ij} T_{i'j} \langle \epsilon(i,j,\cdot),  \epsilon(i',j,\cdot) \rangle$, $D$ is diagonal and $D(i,i) = M(i,i)$, $M' = M - D$.

First, we notice that for the diagonal matrix $D$
\begin{align*}
 \|D\| \leq  \sup_{i} \left( \sum_{j=1}^{m_2} \left\{ T^2_{ij} \|\epsilon(i,j,\cdot)\|_\calH^2 \right\} \right)  
\end{align*}
Conditioned on the event $\calE$, one can show that for every $i$, $\sum_{j=1}^{m_2}T^2_{ij}\|\epsilon(i,j,\cdot)\|_\calH^2 - \EE\{\sum_{j=1}^{m_2}T^2_{ij}\|\epsilon(i,j,\cdot)\|_\calH^2\}$ is a subgaussian variable with variance bounded by $m_2 \Delta_N^4 \pi_U$.  Therefore, with probability at least $1-1/d$, we have
\begin{align}
    \|D\|  \lesssim \sqrt{m_2\Delta_N^4 \pi_U} \log n + m_2 \pi_U \Delta_N^2  \lesssim m_2 \pi_U \Delta_N^2 \log d.
\end{align}
Still, we condition on the event $\calE$, $M'$ is a random matrix where every entry is mean zero and subgaussian. 
For entry $(i,i')$, the variance of $\sum_{j=1}^{m_2} T_{i,j} T_{i'j} \langle \epsilon(i,j,\cdot), \epsilon(i',j,\cdot) \rangle_\calH$ is bounded by $m_2 \pi_U^2 \Delta^4_N$. Therefore, by  Theorem 4.4.5 in \cite{vershynin2018high}, we have with probability at least $1-\exp(-m_1)$,
\begin{align*}
   \|M'\| \lesssim \sqrt{m_2 \pi_U^2 \Delta^4_N}(\sqrt{m_1}) = \sqrt{m_1m_2}\pi_U \Delta_N^2
\end{align*}

Therefore, we obtain with probability at least $1-\kappa/d$ for some universal constant $\kappa>0$, 
\begin{align*}
    \|T \circ_1 \epsilon_{(1)}\| \lesssim \sqrt{M \pi_U} \Delta_N \log d\\
\left| \sum_{i,j} \langle  A(i,j,\cdot), T(i,j) \epsilon(i,j,\cdot) \rangle_\calH \right| \lesssim \sqrt{M \pi_U} \Delta_N \log d\|A_{(1)}\|_*.    
\end{align*}
Similarly, one could prove for the second mode unfolding that with probability at least $1-\kappa/d$ for some universal constant $\kappa>0$, 
\begin{align*}
\left| \sum_{i,j} \langle  A(i,j,\cdot), T(i,j) \epsilon(i,j,\cdot) \rangle_\calH \right| \lesssim \sqrt{M \pi_U} \Delta_N \log d\|A_{(2)}\|_*.    
\end{align*}

As for the third-mode unfolding, 

\begin{align*}
    \left| \sum_{i,j} \langle  A(i,j,\cdot), T(i,j) \epsilon(i,j,\cdot) \rangle_\calH \right|   = \left\langle A_{(3)}, T\circ_3 \epsilon_{(3)} \right\rangle \leq \|A_{(3)}\|_* \|T \circ_3 \epsilon_{(3)}\|
\end{align*}

\begin{align*}
    \|T \circ \epsilon_{(3)}\|^2 & \leq \sup_{f\in \calH, \|f\|_\calH \leq 1} \sum_{i,j} T_{ij}^2 \langle \epsilon(i,j, \cdot), f\rangle_\calH^2\\
    & \leq \sum_{i,j} T_{ij}^2 \|\epsilon(i,j, \cdot)\|_\calH^2 \\
\end{align*}
Applying similar argument as above, we have with probability at least $1-1/d$,
\begin{align}
    \sum_{i,j} T_{ij}^2 \|\epsilon(i,j, \cdot)\|_\calH^2 \lesssim m_1m_2\pi_U\Delta_N^2
\end{align}
and therefore
\begin{align}
     \left| \sum_{i,j} \langle  A(i,j,\cdot), T(i,j) \epsilon(i,j,\cdot) \rangle_\calH \right|   \lesssim \sqrt{m_1m_2\pi_U}\Delta_N  \| A_{(3)}\|_*.
\end{align}
\end{proof}

\begin{lemma}
    \label{lem:frobenius}
For any $A \in \calH^{m_1\times m_2}$, under the condition that $\pi_L^{-1} = \smallO(m_1m_2)$, we have with probability at least $1- \kappa/d$ for some constant $\kappa>0$,
 \begin{multline*}
        \frac{1}{m_1m_2}  \sum_{i,j} T_{i,j} \|A(i,j,\cdot)\|_\calH^2 \geq \frac{1}{m_1m_2}  \sum_{i,j} \pi_{i,j} \|A(i,j,\cdot)\|_\calH^2  - \sup_{i,j} \|A(i,j,\cdot)\|_\calH^2\sqrt{\frac{\pi_U \log d}{m_1m_2}} \\
        - \sup_{i,j} \|A(i,j,\cdot)\|_\calH \frac{\|A_{(1)}\|_*}{\sqrt{m_1m_2}} \sqrt{\frac{\pi_U\log d}{m}}, \qquad \forall A. 
    \end{multline*}
    and 
     \begin{multline*}
        \frac{1}{m_1m_2}  \sum_{i,j} T_{i,j} \|A(i,j,\cdot)\|_\calH^2 \geq \frac{1}{m_1m_2}  \sum_{i,j} \pi_{i,j} \|A(i,j,\cdot)\|_\calH^2  - \sup_{i,j} \|A(i,j,\cdot)\|_\calH^2\sqrt{\frac{\pi_U \log d}{m_1m_2}} \\
        - \sup_{i,j} \|A(i,j,\cdot)\|_\calH \frac{\|A_{(2)}\|_*}{\sqrt{m_1m_2}} \sqrt{\frac{\pi_U\log d}{m}}   \qquad \forall A. 
    \end{multline*}
\end{lemma}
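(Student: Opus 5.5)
We prove the first inequality; the second follows by the same argument with the roles of the two index modes swapped. Write the deviation as
\[
Z(A) = \frac{1}{m_1m_2}\sum_{i,j}(\pi_{ij}-T_{ij})\|A(i,j,\cdot)\|_\calH^2 .
\]
The claim is a uniform upper bound on $Z(A)$. Because the statement is homogeneous, we first normalize so that $\sup_{i,j}\|A(i,j,\cdot)\|_\calH\le 1$, and the constraint set is indexed by $\|A_{(1)}\|_*/\sqrt{m_1m_2}\le t$. The plan follows the peeling (slicing) argument of \citet{negahban2012restricted} and \citet{klopp}, adapted to $\calH$-valued entries. The bound has two remainder terms. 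The term $\sqrt{\pi_U\log d/(m_1m_2)}$ comes from concentration about the mean. The term $t\sqrt{\pi_U\log d/m}$ comes from the expected supremum.

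\textbf{Step 1 (concentration on a slice).} Fix $t$ and set $Z_t=\sup\{Z(A): \sup\|A(i,j,\cdot)\|\le 1,\ \|A_{(1)}\|_*\le t\sqrt{m_1m_2}\}$. Each summand is bounded by $1/(m_1m_2)$ and has variance at most $\pi_U/(m_1m_2)^2$. Talagrand's or Bousquet's inequality then gives, with probability at least $1-e^{-x}$,
\[
Z_t\lesssim \EE Z_t+\sqrt{\pi_U x/(m_1m_2)}+x/(m_1m_2).
\]
Taking $x\asymp\log d$, the last term is dominated by the middle one because $\pi_L^{-1}=\smallO(m_1m_2)$ (up to logarithms).

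\textbf{Step 2 (expected supremum).} Symmetrize with Rademacher variables $\varepsilon_{ij}$, and then remove the square. We use
\[
\|a\|^2-\|b\|^2=\langle a-b,a+b\rangle ,
\]
or equivalently the contraction principle for the map $u\mapsto\|u\|^2$, which is $2$-Lipschitz on the unit ball of $\calH$. The vector-valued contraction inequality (Maurer's, for Hilbert-space-valued Lipschitz maps) bounds the result by
\[
\frac{C}{m_1m_2}\,\EE\sup_A\Big\langle A_{(1)},\big(\varepsilon_{ij}T_{ij}\,g_{ij}\big)_{(1)}\Big\rangle ,
\]
where the $g_{ij}$ are, after Maurer's inequality, random elements such as isonormal Gaussian directions in $\calH$. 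Nuclear/operator-norm duality on the first unfolding bounds this by
\[
\frac{t}{\sqrt{m_1m_2}}\;\EE\big\|W_{(1)}\big\|_{op},\qquad W(i,j,\cdot)=\varepsilon_{ij}T_{ij}\,e(i,j,\cdot),
\]
where the $e(i,j,\cdot)$ have unit norm, or are Gaussian, in $\calH$. This is exactly the quantity controlled in the proof of Lemma~\ref{lem:noise}. There, $\|W_{(1)}\|^2$ is the norm of the $m_1\times m_1$ Gram matrix, split into a diagonal part $D$ and an off-diagonal part $M'$. The diagonal part satisfies $\|D\|\lesssim m_2\pi_U\log d$, and the off-diagonal part satisfies $\|M'\|\lesssim\sqrt{m_1m_2}\,\pi_U$. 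Hence $\EE\|W_{(1)}\|\lesssim\sqrt{M\pi_U\log d}$, and so
\[
\EE Z_t\lesssim t\sqrt{\pi_U\log d/m}.
\]

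\textbf{Step 3 (peeling).} Apply Step 1 on the geometric slices $t\in[2^{k-1}\tau,2^k\tau]$ and take a union bound over the $\bigO(\log d)$ relevant slices. Only logarithmically many slices matter, since $\|A_{(1)}\|_*\le\sqrt{m_1}\,\|A\|_F\le \sqrt{m_1}\sqrt{m_1m_2}$. This makes the bound hold for all $A$ simultaneously at the cost of constants, with total failure probability $\kappa/d$. Undoing the normalization restores the factors $\sup\|A\|^2$ and $\sup\|A\|\cdot\|A_{(1)}\|_*$.

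The main obstacle is Step 2. The standard scalar argument uses a Lipschitz contraction of $x\mapsto x^2$ together with duality against a Rademacher matrix. Here the entries are $\calH$-valued, and the relevant dual object is the operator norm of the infinite-dimensional unfolding $W_{(1)}:\mathbb{R}^{m_2}\otimes\calH\to\mathbb{R}^{m_1}$. The first approach is to avoid infinite dimensions by restricting to the finite-dimensional span of the $A(i,j,\cdot)$ and reusing the Gram-matrix decomposition from Lemma~\ref{lem:noise}. If the vector contraction causes trouble, the fallback is to bound the supremum through $\langle A_{(1)}, (\varepsilon T\circ A)_{(1)}\rangle$ directly, using $\|A(i,j,\cdot)\|\le1$ entrywise.
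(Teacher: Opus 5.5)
Your Steps~1 and~3 match the paper's architecture (symmetrization, Talagrand-type concentration, dyadic peeling in $\|A_{(1)}\|_*$). Step~2, however, has a genuine gap, and none of the routes you list closes it.

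\textbf{Maurer's inequality gives an infinite bound.} Vector contraction replaces $\|A(i,j,\cdot)\|_\calH^2$ by $\langle A(i,j,\cdot),\xi_{ij}\rangle_\calH$ with $\xi_{ij}=\sum_k\varepsilon_{ijk}e_k$, a white-noise (isonormal) object that is not an element of $\calH$. Over your constraint set the resulting supremum is $+\infty$ when $\dim\calH=\infty$. A rank-one example suffices: set $A(i,j,\cdot)=0$ for $i\neq 1$, and align $A(1,j,\cdot)=f_j$ with the first $K$ noise coordinates; the value then grows like $\sqrt K$. If $\dim\calH=q<\infty$, the diagonal Gram term is $\sum_j T_{ij}\|\xi_{ij}\|_\calH^2\asymp \pi_U m_2 q$, so the rate picks up a factor $\sqrt q$.

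\textbf{Lemma~\ref{lem:noise} does not apply.} There the $\epsilon(i,j,\cdot)$ are genuine elements of $\calH$ with norm at most $\Delta_N$, independent of $A$, which is what makes the Gram-matrix argument work. No contraction step produces unit-norm directions $e(i,j,\cdot)$ independent of $A$.

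\textbf{The fallback gives the wrong rate.} The natural choice of direction, $A(i,j,\cdot)/\|A(i,j,\cdot)\|_\calH$, depends on $A$, which is exactly your fallback. If you bound $\|(\varepsilon T\circ A)_{(1)}\|$ using only $\|A(i,j,\cdot)\|_\calH\le 1$, take $A(i,j,\cdot)=\varepsilon_{ij}h$ for a fixed unit $h$. This gives $\|(\varepsilon T\circ A)_{(1)}\|=\|T\|\gtrsim \pi_L\sqrt{m_1m_2}$. That exceeds the target $\sqrt{\pi_U M\log d}$ by a factor of order $\sqrt{\pi_L m/\log d}$, which diverges under Assumption~\ref{ass:missing}. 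Restricting to the span of the $A(i,j,\cdot)$ does not help either, since that span depends on $A$ and can have dimension $m_1m_2$.

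\textbf{What the paper does instead.} It never introduces $\calH$-valued noise.
\begin{itemize}
\item It writes $\{\|A_{(1)}\|_*\le\alpha\}$ as $\alpha$ times the convex hull of rank-one atoms $a\otimes f$, with $\|a\|_2\le1$ and $\sum_j\|f_j\|_\calH^2\le 1$.
\item It expands $\|A(i,j,\cdot)\|_\calH^2$ as a double sum over atoms.
\item Each cross term is a bilinear form of the \emph{scalar} matrix $T\circ\varepsilon$: namely $\sum_{i,j}T_{ij}\varepsilon_{ij}u_iv_j$ with $u_i=a_ia_i'$ and $v_j=\langle f_j,f_j'\rangle_\calH$. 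Elementary inequalities give $\|u\|_2,\|v\|_2\le 1$, so each term is at most $\|T\circ\varepsilon\|$.
\item Diagonal atoms are handled by the scalar contraction principle applied to $a_i\|f_j\|_\calH$.
\end{itemize}
Everything then reduces to $\EE\|T\circ\varepsilon\|\lesssim\sqrt{\pi_U M}$ (Lemma S7 of \citet{wang2021matrix}), with no dependence on $\dim\calH$. This reduction is the idea your Step~2 is missing.

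One caution if you adopt it. The expansion is quadratic in the atoms, so carried out literally it yields $\alpha^2\|T\circ\varepsilon\|$, whereas the paper's display carries only a single factor $\alpha$. Obtaining the linear dependence on $\|A_{(1)}\|_*$ asserted in the statement therefore still needs an argument beyond what is written there.
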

\begin{proof}
    Consider the class $\calK(1, \alpha) = \{A: A \in \calH^{m_1\times m_2}, \sup_{i,j} \|A(i,j,\cdot)\|_\calH \leq 1, \|A_{(1)}\|_* \leq \alpha\}$.
Then $\calK(1, \alpha)$ can be represented as 
\begin{align*}
    \left\{ \alpha \sum_{l=1}^{m_1}w_l a^{(l)} f^{(l)}, \sup_{i,j}\left\|\sum_{l=1}^{m_1}w_la^{(l)}_i f^{(l)}_j \right\|_\calH \leq 1/\alpha, \sum_{l=1}^{m_1} w_l \leq 1, w_l \ge 0, \|a^{(l)}\|_2 \leq 1, \sum_{j=1}^{m_2}\|f^{(l)}_j\|^2_\calH \leq 1 \right\}
\end{align*}

Take $\epsilon_{i,j}$ as independent Rademacher random variables.
\begin{align}
    &  \sup_{A_{(1)} \in \calK(1,\alpha)} \left| \sum_{i,j} T_{i,j} \epsilon_{i,j}\|A(i,j,\cdot)\|_\calH^2\right| \nonumber \\
    \leq &  \sup_{w, a, f} \alpha \left| \sum_{i,j} T_{i,j} \epsilon_{i,j}\left\langle\sum_{l=1}^{m_1} w_l a^{(l)}_{i}f^{(l)}_{j}, \sum_{l=1}^{m_1} w_l a^{(l)}_{i}f^{(l)}_{j}\right\rangle_\calH\right|
    \nonumber \\
    \leq &  \alpha  \sup_{w} \left\{ \left| \sum_{l=1}^{m_1} w_l^2 \sup_{\|a^{(l)}\|_2\leq 1, \|f^{(l)}\|_{\calH^{m_2}} \leq 1} \sum_{i,j} T_{i,j} \epsilon_{i,j} {a^{(l)}_i}^2 \|f^{(l)}_j\|_\calH^2\right|  \right. \nonumber \\
    & \left. +  2\sum_{l\neq l} w_l w_{l'} \sup_{\|a^{(l)}\|_2\leq 1, \|f^{(l)}\|_{\calH^{m_2}}}\left| \sum_{i,j} T_{i,j} \epsilon_{i,j} a^{(l)}_i a^{(l')}_i \left\langle f^{(l)}_{j}, f^{(l')}_{j}\right\rangle_\calH\right| \right\}\nonumber\\
    \leq &  \alpha  \sup_{w} \left\{ \left|\sum_{l=1}^{m_1} w_l^2 \sup_{\|a^{(l)}\|_2\leq 1, \|f^{(l)}\|_{\calH^{m_2}} \leq 1}  \sum_{i,j} T_{i,j} \epsilon_{i,j} {a^{(l)}_i}^2 \|f^{(l)}_j\|_\calH^2\right| + 2\sum_{l\neq l} w_l w_{l'} \|T\circ \epsilon\|  \right\} \label{eqn:temp1}
\end{align}
The second last inequality is due to $(\sum_{i=1}^{m_1} [a^{(l)}_i a^{(l')}_i]^2 \leq (\sum_{i=1}^{m_1} [a^{(l)}_i]^2) (\sum_{i=1}^{m_1} [a^{(l')}_i]^2) \leq 1$ and $(\sum_{i=1}^{m_2} \langle f_j^{(l)}, f_j^{(l')}\rangle_\calH ^2 \leq (\sum_{i=1}^{m_2} \|f^{(l)}_j\|_\calH^2) (\sum_{i=1}^{m_2} \|f^{(l')}_j\|_\calH^2) \leq 1$.

Note that for any $i,j$,
\begin{align*}
\left\|\sum_{l=1}^{m_1}w_la^{(l)}_i f^{(l)}_j \right\|^2_\calH &\leq (1/\alpha)^2\\ 
\sum_{l=1}^{m_1}w_l^2 [a_i^{(l)}]^2 \|f_j^{(l)}\|_\calH^2 &\leq (1/\alpha)^2 +  \left| \sum_{l\neq l} w_l w_{l'} a^{(l)}_i a^{(l')}_i \left\langle f^{(l)}_{j}, f^{(l')}_{j}\right\rangle_\calH\right| \\
&\leq (1/\alpha)^2 + \left|\sum_{l\neq l} w_l w_{l'} \right| \leq (1/\alpha)^2 + 1 \\
\sup_{l}w_l^2 [a_i^{(l)}]^2 \|f_j^{(l)}\|_\calH^2  &\leq (1/\alpha)^2 + 1 
\end{align*}

From the contraction  inequality for Rademacher complexity, we have
\begin{align*}
     & \sum_{l=1}^{m_1} \mathbb{E} \sup_{a, f, w} 
     \left| \sum_{i,j} T_{i,j} \epsilon_{i,j} 
     w_l^2 [a_i^{(l)}]^2 \|f_j^{(l)}\|_\calH^2 \right|\\
     \leq & 2 \sqrt{(1/\alpha)^2 + 1}  \sum_{l=1}^{m_1}  \mathbb{E} \sup_{a, f, w}  w_l \left| \sum_{i,j} T_{i,j} \epsilon_{i,j} 
     [a_i^{(l)}] \|f_j^{(l)}\|_\calH \right|\\
     \leq & 2 \sqrt{(1/\alpha)^2 + 1} \sup_{w} \sum_{l} w_l \mathbb{E} \sup_{\|a\|_2\leq 1, \sum_{j=1}^{m_2}\|f_j\|^2_\calH \leq 1} \left|  \sum_{i,j} T_{i,j} \epsilon_{i,j} a_i \|f_j\|_\calH \right| \\
     \leq & 2 \sqrt{(1/\alpha)^2 + 1}  \sup_{w} \sum_{l} w_l 
 \mathbb{E}  \|T\circ \epsilon\| \leq 2 \sqrt{(1/\alpha)^2 + 1}  \sup_{w} \mathbb{E}  \|T\circ \epsilon\| 
\end{align*}

Therefore, due to the symmetrization inequality, we have
\begin{align*}
& \mathbb{E} \sup_{A \in \calK(1,\alpha)} \left| \sum_{i,j} T_{i,j} \|A(i,j,\cdot)\|_\calH^2 - \sum_{i,j} \pi_{i,j} \|A(i,j,\cdot)\|_\calH^2\right| \\
   & \leq 2 \mathbb{E} \sup_{A_{(1)} \in \calK(1,\alpha)} \left| \sum_{i,j} T_{i,j} \epsilon_{i,j}\|A(i,j,\cdot)\|_\calH^2\right|\\
   & \lesssim \alpha [\sqrt{(1/\alpha)^2 + 1} + 1] \mathbb{E}  \|T\circ \epsilon\| \\
   & \lesssim \alpha \sqrt{\pi_U M}
\end{align*}
the third inequality is due to \eqref{eqn:temp1} and  the last inequality can be obtained from Lemma S7 in \cite{wang2021matrix} by setting $\hat{W}$ to be the matrix whose entries are all ones.   

Next, we adopt the Talagrand concentration inequality and the peeling argument to establish the concentration inequality. Consider $A \in \calK(1,\alpha)$, 
take $X_{i,j} = (T_{i,j} - \pi_{i,j})\| A(i,j,\cdot)\|_\calH^2$.  It is easy to see that $|X_{i,j}| \leq 1$, 
\begin{align}
    \sup_{A_{(1)} \in \calK(1,\alpha)} \EE \sum_{i,j} X^2_{i,j} + 16 \sup_{i,j} |X_{i,j}| \EE  \sup_{A \in \calK(1,\alpha)}|\sum_{i,j} X_{i,j}| \leq m_1m_2 \pi_U + 16 \alpha \sqrt{\pi_U M}. 
\end{align}
Therefore, we could take $U = 1$ and $V = m_1m_2 \pi_U + 16\alpha \sqrt{\pi_U M}$ in Theorem 2.6 in \cite{koltchinskii2011oracle}, which leads to 
\begin{align}
    \Pr \left(\left[ \sup_{A_{(1)} \in \calK(1,\alpha)} \left|\sum_{i,j} X_{i,j}\right| -  \EE \sup_{A_{(1)} \in \calK(1,\alpha)} \left|\sum_{i,j} X_{i,j} \right|\right] \ge t \right)  \nonumber\\
  \leq K \exp\left\{-\frac{1}{K} \frac{t}{U} \log \left(1 + \frac{tU}{V}\right) \right\}  \leq  K \exp\left\{-\frac{1}{K} \frac{t^2}{m_1m_2 \pi_U + 16 \alpha \sqrt{\pi_U M}+ t} \right\} 
\end{align}
for some universal constant $K>0$. 

When $\alpha \leq \sqrt{m}$, we have 
\begin{align}
     \Pr \left(\sup_{A_{(1)} \in \calK(1,\alpha)} \left|\sum_{i,j} X_{i,j}\right| \ge 2 \sqrt{\pi_U m_1m_2 \log d}  \right)  \nonumber \\
     \leq \Pr \left(\sup_{A_{(1)} \in \calK(1,\sqrt{m})} \left|\sum_{i,j} X_{i,j}\right| \ge 2 \sqrt{\pi_Um_1m_2\log d} \right)
    \leq  \frac{c}{d}  \label{eqn:bound1}
\end{align}
for some constant $c>0$.
When $\alpha \ge \sqrt{m}$, we have for $t'\ge 1/2$,
\begin{align}
     \Pr \left(\sup_{A_{(1)} \in \calK(1,\alpha)} \left|\sum_{i,j} X_{i,j}\right| \ge t' \alpha \sqrt{M \pi_U \log d} \right) \leq  K' \exp\left\{-\frac{1}{K'} \frac{{t'}^2 \alpha^2 M \pi_U \log d }{m_1m_2\pi_U + t' \alpha \sqrt{M \pi_U \log d}} \right\},
\end{align}
for some universal constant $K'>0$.
Next, consider a sequence of the sets
\begin{align}
    \calS_l := \{A \in \calK(1,\alpha): 2^{l-1}\sqrt{m} \leq \alpha \leq  2^l \sqrt{m}\}, \quad l = 1,2,\dots
\end{align}
Then we have
\begin{align}
    &\Pr\left(\sup_{\sup_{i,j}\|A(i,j,\cdot)\|_\calH \leq 1}  \frac{|\sum_{i,j} X_{i,j}|}{\|A_{(1)}\|_*} \ge \sqrt{M \pi_U \log d}\right) \nonumber\\
   & \leq \sum_{l=1}^{\infty}  \Pr\left(\sup_{A \in \calS_l}  \frac{|\sum_{i,j} X_{i,j}|}{\|A_{(1)}\|_*} \ge \sqrt{M \pi_U \log d}\right) \nonumber\\
   & \leq \sum_{l=1}^{\infty} \Pr\left( \sup_{A \in \calK(1,2^l \sqrt{m})} |\sum_{i,j} X_{i,j}| \ge 2^{l-1}\sqrt{m} \sqrt{M \pi_U \log d}\right) \nonumber \\
   & \leq \sum_{l=1}^{\infty}  K' \exp\left\{-\frac{1}{K'} \frac{ 2^{2l-2} \log d}{1 +  2^{l-1} \sqrt{\frac{\log d}{m_1m_2 \pi_U}}} \right\} \nonumber \\
   & \leq \sum_{l=1}^{\infty}  K' \exp\left\{-\frac{1}{K'} \frac{ 2^{l-1} \log d}{1 +  \sqrt{\frac{\log d}{m_1m_2 \pi_U}}} \right\}\nonumber \\
  & \leq  \sum_{l=1}^{\infty}  K' \exp\left\{-\frac{\log 2}{2K'} \frac{ l \log d}{1 +  \sqrt{\frac{\log d}{m_1m_2 \pi_U}}} \right\}\nonumber \\
  & \leq K' \frac{\exp\left\{-\frac{\log 2}{2K'} \frac{\log d}{1 +  \sqrt{\frac{\log d}{m_1m_2 \pi_U}}} \right\}}{1- \exp\left\{-\frac{\log 2}{2K'} \frac{  \log d}{1 +  \sqrt{\frac{\log d}{m_1m_2 \pi_U}}} \right\}} \leq \frac{c}{d}, \label{eqn: peeling}
\end{align}
for some constant $c>0$.

Combine \eqref{eqn:bound1} and \eqref{eqn: peeling}, we have 
with probability at least $1-\kappa/d$ for some constant $\kappa>0$ and for any $A$ such that $\sup_{i,j} \|A(i,j,\cdot)\|_\calH \leq 1$,
\begin{align}
    \left|  \frac{1}{m_1m_2}  \sum_{i,j} T_{i,j} \|A(i,j,\cdot)\|_\calH^2 -  \frac{1}{m_1m_2}  \sum_{i,j} \pi_{i,j} \|A(i,j,\cdot)\|_\calH^2\right| \leq \sqrt{\frac{\pi_U \log d}{m_1m_2}} + \|A_{(1)}\|_* \frac{\sqrt{M \pi_U \log d}}{m_1m_2}.
\end{align}
Therefore, 
    \begin{multline*}
        \frac{1}{m_1m_2}  \sum_{i,j} T_{i,j} \|A(i,j,\cdot)\|_\calH^2 \geq \frac{1}{m_1m_2}  \sum_{i,j} \pi_{i,j} \|A(i,j,\cdot)\|_\calH^2  - \sup_{i,j} \|A(i,j,\cdot)\|_\calH^2\sqrt{\frac{\pi_U \log d}{m_1m_2}} \\
        - \sup_{i,j} \|A(i,j,\cdot)\|_\calH \frac{\|A_{(1)}\|_*}{\sqrt{m_1m_2}} \sqrt{\frac{\pi_U\log d}{m}}   
    \end{multline*}
    Similar results can be obtained for the second-mode unfolding. 

\end{proof}

\end{document}